\documentclass[journal]{IEEEtran}
\ifCLASSINFOpdf
   \usepackage[pdftex]{graphicx}
\else
\fi
\hyphenation{op-tical net-works semi-conduc-tor}

\usepackage{graphicx}
\usepackage{subcaption}
\usepackage{amsmath}
\usepackage[version=4]{mhchem}
\usepackage{siunitx}
\usepackage{longtable,tabularx}
\usepackage{algorithm}
\usepackage{algpseudocode}
\usepackage{amsthm}
\usepackage{amssymb}
\usepackage{multirow}
\usepackage{caption,psfrag}
\usepackage{amsfonts}
\usepackage{xcolor}
\usepackage{mathtools}
\newtheorem{prop}{\textit{Proposition}}
\newtheorem{obs}{\textit{Observation}}
\newtheorem{theorem}{\textit{Theorem}}
\theoremstyle{definition}
\newtheorem{definition}{Definition}
\newtheorem{lemma}{\textit{Lemma}}

\newcommand{\argmin}{\operatorname{argmin}}
\newcommand{\argmax}{\operatorname{argmax}}
\DeclarePairedDelimiter\Floor\lfloor\rfloor

\setlength\LTleft{0pt}




%
\title{ Mathematical Properties of Generalized Shape Expansion-Based Motion Planning Algorithms}
%
%
%

\author{Adhvaith Ramkumar$^{1}$, Vrushabh Zinage$^{1}$ and Satadal Ghosh$^{1}$
\thanks{$^{1}$Adhvaith Ramkumar, Vrushabh Zinage and Satadal Ghosh are with the Department of Aerospace Engineering, Indian Institute of Technology Madras, India
        {\tt\small $\{$ae16b018,ae16b017,satadal$\}$@smail.iitm.ac.in}}
}
\begin{document}
\bibliographystyle{IEEEtran}

\maketitle

\begin{abstract}
Motion planning is an essential aspect of autonomous systems and robotics and is an active area of research. A recently-proposed sampling-based motion planning algorithm, termed 'Generalized Shape Expansion' (GSE), has been shown to possess significant improvement in computational time over several existing well-established algorithms. The GSE has also been shown to be probabilistically complete. However, asymptotic optimality of the GSE is yet to be studied. To this end, in this paper we show that the GSE algorithm is not asymptotically optimal by studying its behaviour for the promenade problem. In order to obtain a probabilistically complete and asymptotically optimal generalized shape-based algorithm, a modified version of the GSE, namely '$\text{GSE}^{\star}$' algorithm, is subsequently presented. The forementioned desired mathematical properties of the  $\text{GSE}^{\star}$ algorithm are justified by its detailed analysis. Numerical simulations are found to be in line with the theoretical results on the $\text{GSE}^{\star}$ algorithm. 
\end{abstract}

\begin{IEEEkeywords}
motion planning, sampling-based algorithms.
\end{IEEEkeywords}

\IEEEpeerreviewmaketitle

\section{Introduction}
Unmanned vehicle Systems (UxS) have seen a steady increase in attention over the past decade due to rapid advancements in technology and reduced human risk associated to it. Applications of UxS range from civilian ones like pollutant monitoring to defense-related ones like surveillance and reconnaissance. Motion planning being a key aspect of any UxS is an active area of research.
 
An early seminal paper in the field of motion planning utilized artificial potential fields \cite{Khatib_1}, however it was shown to suffer from the problem of local minima \cite{Koren_1}. Numerous online motion planning strategies have been formulated using several methods like velocity obstacles \cite{vo_1} and its extensions such as optimal reciprocal collision avoidance \cite{orca_1}, collision cone approach \cite{ chakravarthy2011collision, chakravarthy2012generalization} 
gradient vector fields \cite{davidcasbeer}, 
pseudospectral methods \cite{jgcd_gong2009pseudospectral}
and terminal angle-constrained guidance theory \cite{Ghosh_1}, \cite{jgcd_ghosh2017unmanned}. In the presence of prior knowledge about the environment, offline motion planners such as discrete search-based  methods \cite{Brooks1985ASA,Mitchell1992} and sampling-based methods  \cite{prm,sertac_karaman2011sampling_rrt_star} are employed. Optimization-based  techniques such as sequential convex programming \cite{Chen,Augugliaro} and quadratic programming methods \cite{upenn} are also used. 

Among offline planners, sampling-based algorithms are a  highly favoured approach due to computational advantages in higher-dimensional spaces. A sampling-based algorithm generates a connected graph, containing the initial and goal points, entirely inside the free-space. This is accomplished by obtaining random samples from the free-space and successively adding them to the graph, eventually connecting the initial and goal points. The optimal path is then found among the paths between the initial and goal points. Several sampling-based algorithms have been developed in literature such as  Probabilistic Road Maps (PRM) \cite{prm}, Rapidly-exploring Random Trees (RRT)\cite{rrt_attraction_sequence} and Fast Marching Tree (FMT$^\star$) \cite{janson2015fast}. A spherical expansion-based motion planning algorithm, termed 'SE-SCP' \cite{sescp_baldini2016fast} was shown to be computationally more efficient than PRM$^\star$ and RRT$^\star$  via numerical studies. In order to better leverage obstacle-space information, an algorithm termed 'Generalized Shape Expansion' (GSE) algorithm, relying on a more generic shape expansion has recently been presented in \cite{gse} for motion planning in 2-D obstacle-cluttered environments. 
Its extension to 3-D environment was also presented in \cite{3d_gse_journal,3d_gse}. The GSE algorithm leveraged a novel generalized shape for the expansion of the generated graph, which helped in exploring the free-space in a fast yet more extensive manner. 
Through extensive numerical simulation studies, the performance of the GSE in terms of computation time was found to be significantly better than several other well-established sampling-based methods. An even faster planner has been presented in \cite{lcss_2020directional_gse} by embedding directional sampling scheme to the GSE for 2-D environments.   
While sampling-based motion planning algorithms cannot provide deterministic guarantees, two probabilistic criteria are used to evaluate the utility of these algorithms. These properties are probabilistic completeness and asymptotic optimality. Probabilistic completeness of an algorithm guarantees that the probability of failure of an algorithm decays to zero, as the number of samples goes to infinity, while asymptotic optimality of an algorithm guarantees that the algorithm is highly likely to find the optimal path as the number of samples goes to infinity. Algorithms like PRM and RRT were found to be probabilistically complete \cite{prmprobcomp,rrt_attraction_sequence}, but they are not asymptotically optimal \cite{sertac_karaman2011sampling_rrt_star}. In order to ensure asymptotic optimality, their modified versions such as RRT$^\star$ and PRM$^\star$ were developed and analyzed in \cite{sertac_karaman2011sampling_rrt_star}.

{This paper is evolved from \cite{probabilistic_completeness_gse}, where the  probabilistic completeness property of the GSE algorithm was established. }
However asymptotic optimality of the GSE algorithm has not yet been studied so far. To this end, in this paper, we consider the promenade problem, for which following the methodology of \cite{nechushtan} we find that the GSE has a non-zero probability of returning low-quality solutions for the promenade problem. This establishes the lack of asymptotic optimality of the GSE algorithm. Subsequently, we present a modification of the GSE algorithm, termed as the GSE$^\star$ algorithm, in order to ensure asymptotic optimality as well as probabilistic completeness. Besides the GSE expansion, the $\text{GSE}^{\star}$ algorithm adds additional vertices and edges based on a decreasing connection radii criterion to the graph generated by the GSE, thus limiting the path cost while further exploring the environment. 
The basic GSE graph, a subset of the $\text{GSE}^{\star}$ graph, also helps in retaining computational advantage. A detailed analysis is carried out of the GSE$^\star$ algorithm for theoretical verification of both probabilistic completeness and asymptotic optimality.

The paper is organized as follows. The problem description along with a brief description of the probabilistic completeness and asymptotic optimality of a path planning problem is given in Section \ref{sec:Problem}. We provide the $d$-dimensional GSE algorithm in Section \ref{sec:GSE}. This is followed by the study of mathematical properties of the GSE algorithm 
in Section \ref{sec:probcomp} and \ref{GSEasopt}. Next, we present the $\text{GSE}^{\star}$ algorithm in Section \ref{GSE*alg} followed by an analysis of the same in Section \ref{GSE*probcomp} and \ref{GSE*asopt}. Simulation studies are presented to compare GSE and $\text{GSE}^{\star}$ 
in Section \ref{sec:results}, followed by the conclusion in Section \ref{sec:conclusion}.

\begin{figure*}[]
\captionsetup[subfigure]{justification=centering}
\centering
\begin{subfigure}{0.23\textwidth}
\fbox{\includegraphics[scale=0.12]{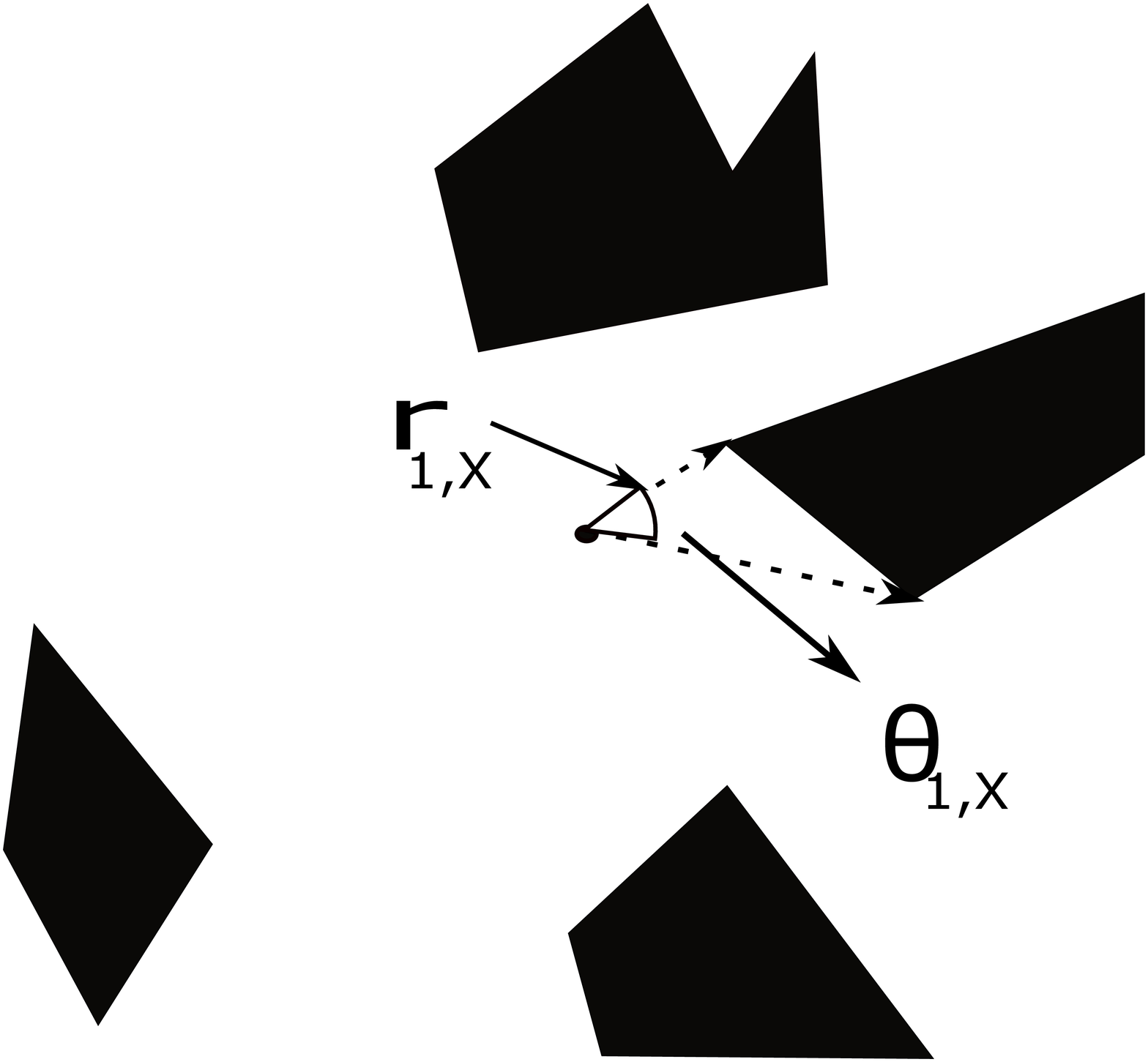}}
\caption{$\boldsymbol{r}_{1,X}$ and ${\theta_{1,X}}$ }
\label{}
\end{subfigure}
\begin{subfigure}{0.23\textwidth}
\fbox{\includegraphics[scale=0.12]{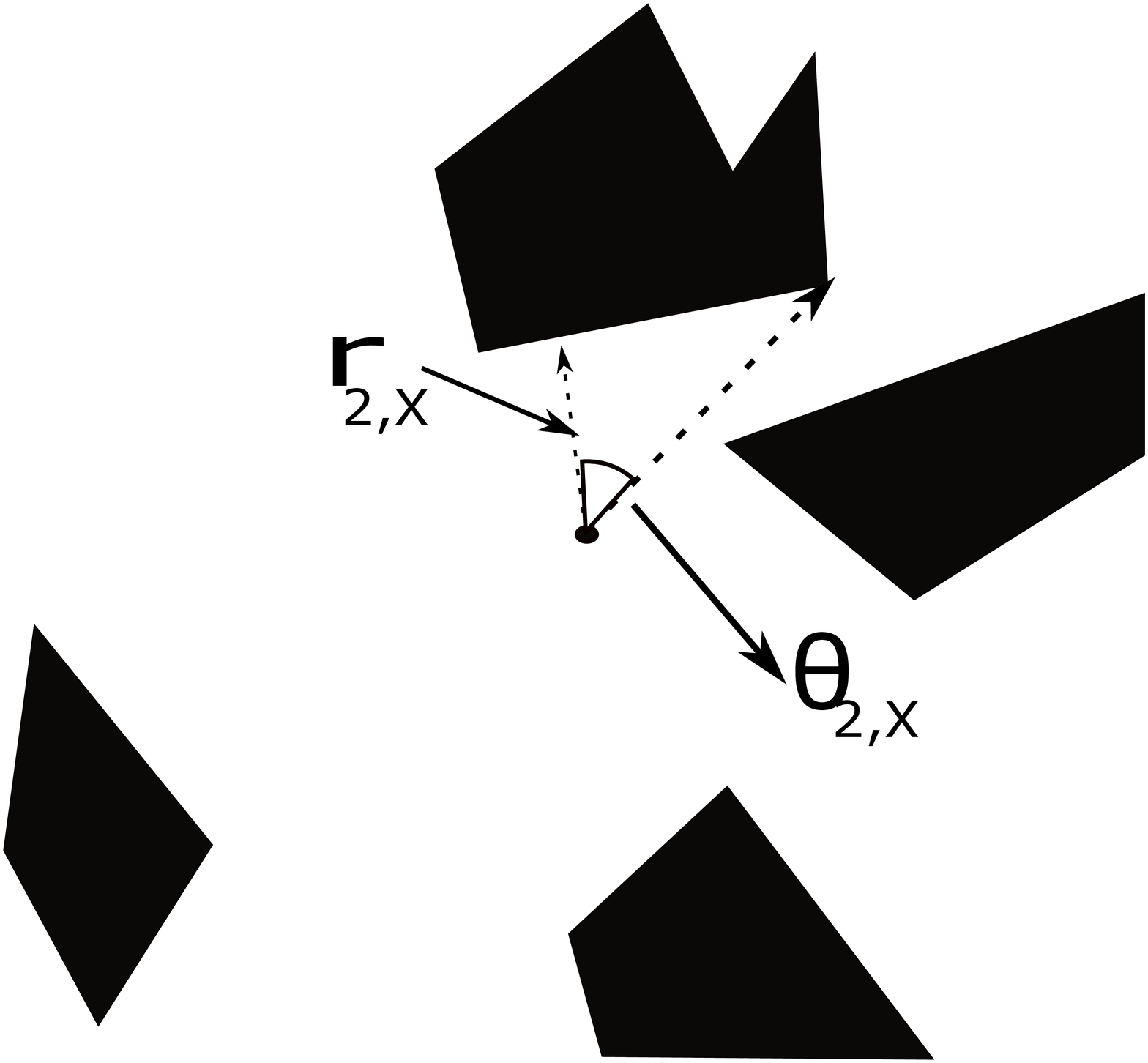}}
\caption{$\boldsymbol{r}_{2,X}$ and ${\theta_{2,X}}$}
\label{}
\end{subfigure}
\begin{subfigure}{0.23\textwidth}
\fbox{\includegraphics[scale=0.12]{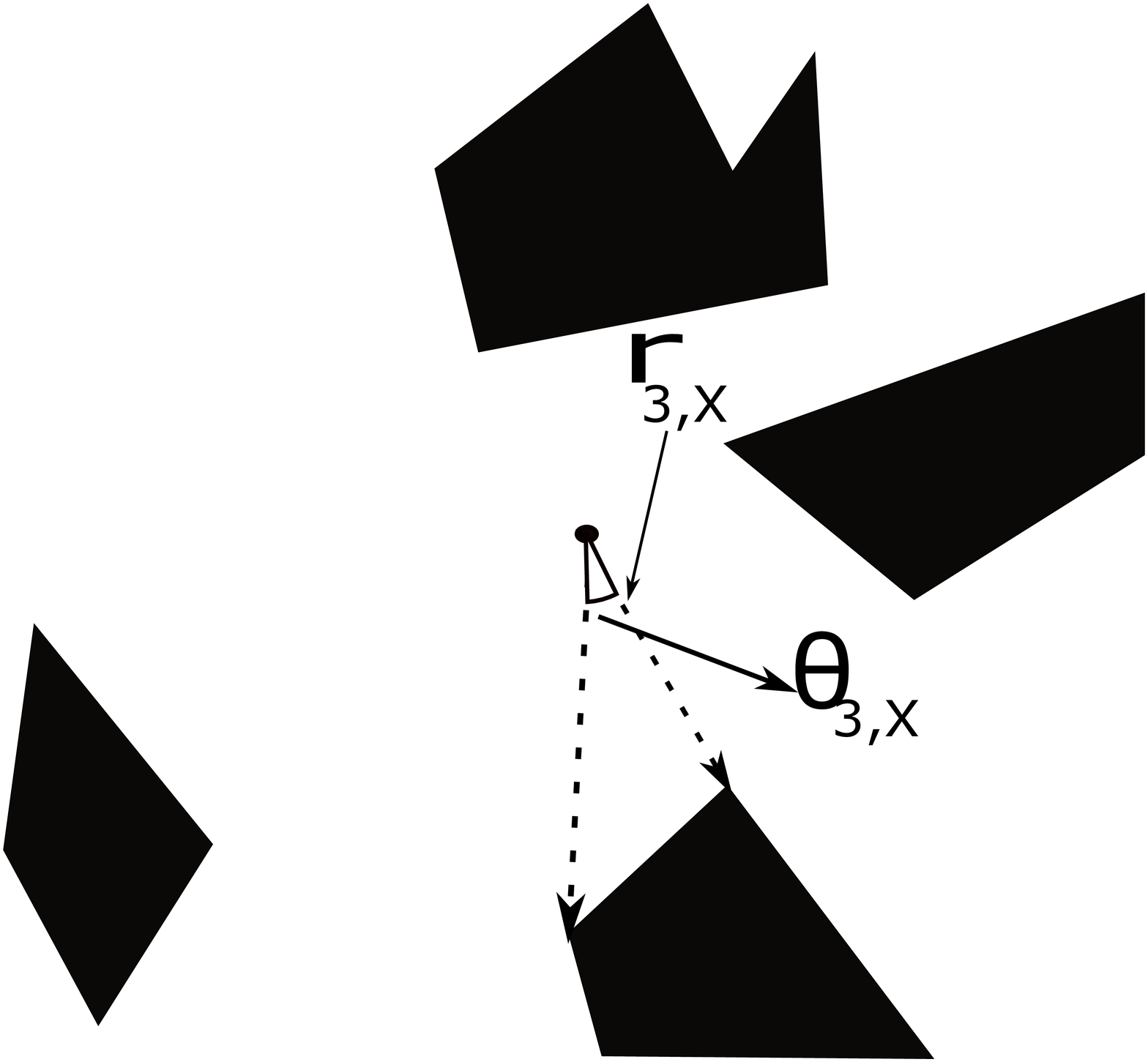}}
\caption{$\boldsymbol{r}_{3,X}$ and ${\theta_{3,X}}$}
\label{}
\end{subfigure}
\begin{subfigure}{0.23\textwidth}
\fbox{\includegraphics[scale=0.12]{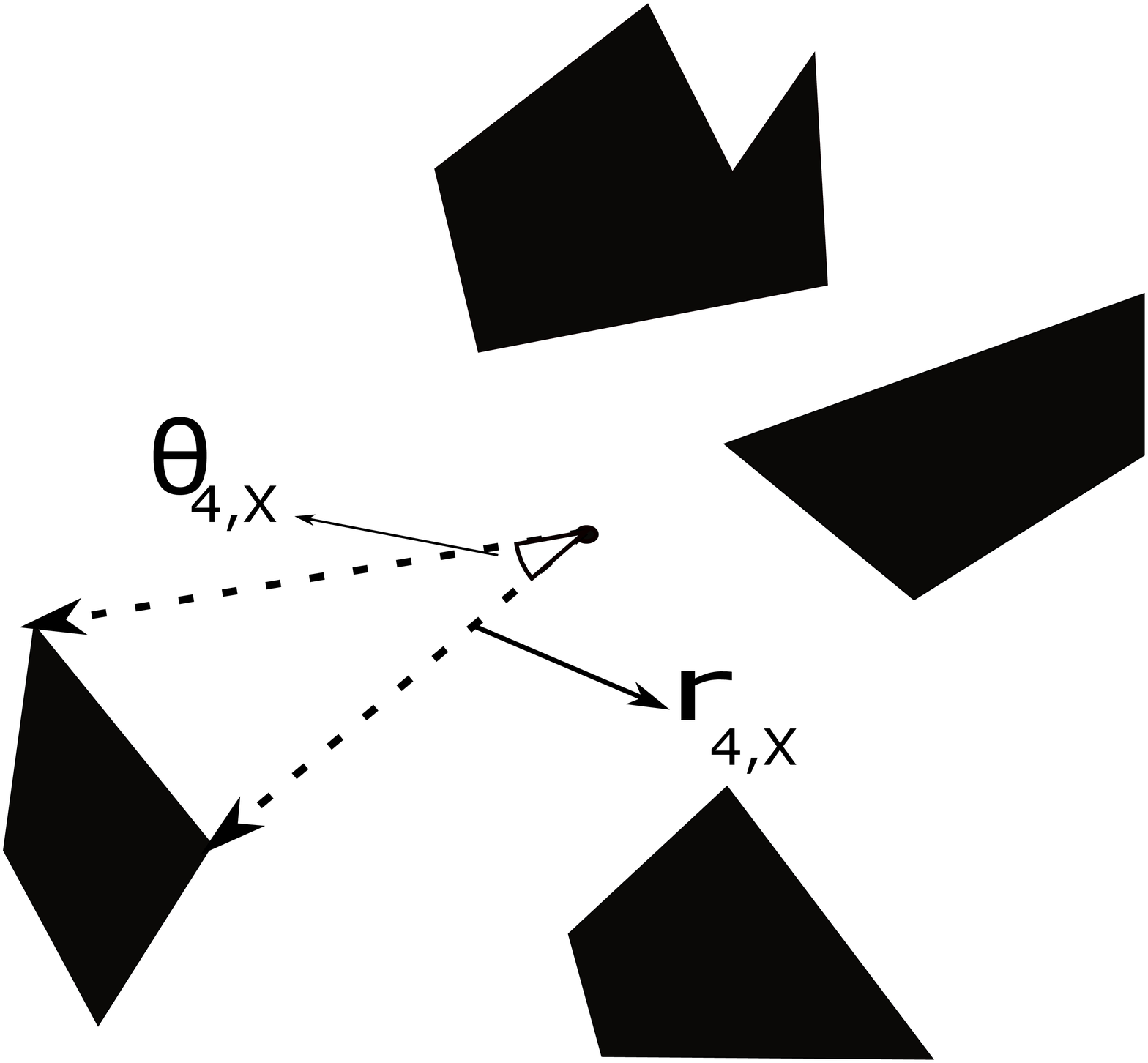}}
\caption{$\boldsymbol{r}_{4,X}$ and ${\theta_{4,X}}$}
\label{}
\end{subfigure}
\begin{center}
\captionsetup[subfigure]{justification=centering}
\centering

\end{center}
\caption{Depiction of $\theta_{i,X}$ and $\boldsymbol{r}_{i,X}$ for all $i\in\{1,2,3,4\}$ for a 2-D environment}
\label{fig:generalized_shape_d}
\end{figure*}
\section{Problem Description} \label{sec:Problem}
In this section, we introduce the general path planning problem along with the probabilistic completeness and asymptotic optimality of the path planning problem. Finally the problem statement is introduced which we address in this paper. 
\subsection{Path Planning Problem}
Consider a compact configuration space $\mathbb{X}=[0,1]^d$, where $d$ is the dimension of the workspace. Let the obstacles in the configuration space be denoted as $\mathbb{X}_\text{obs}$. The free-space in $\mathbb{X}$ is then given by $\mathbb{X}_{\text{free}}= \mathbb{X}\setminus\mathbb{X}_\text{obs}$. 
  We assume that the free-space is open and the obstacle space is closed. The Start point $\boldsymbol{X}_{\text {init }}$  is an element of $\mathbb{X}_{\text {free }}$, and the Goal point $\boldsymbol{X}_{\text {goal }}\in\mathbb{X}_\text{free}$. 
  Given a function $\sigma:[0,1] \rightarrow \mathbb{X}$,  the total variation of $\sigma$ is defined as follows,
  \begin{align}
   \operatorname{TV}(\sigma)=\sup _{\left\{n \in \mathbb{N}, 0=b_{0}<\cdots<b_{n}=1\right\}} \sum_{i=1}^{n}\left|\sigma\left(b_{i}\right)-\sigma\left(b_{i-1}\right)\right|   
  \end{align}
Where, the supremum is taken over all partitions of the set $[0,1]$. The symbol $\|.\|$ denotes the Euclidean norm.
A function $\sigma$ with $\mathrm{TV}(\sigma)<\infty$ is said to have bounded variation.

\begin{definition}
\textbf{(Path)}: A function $\sigma:[0,1] \rightarrow \mathbb{X}$ of bounded variation is called a
\begin{itemize}
    \item Path, if it is continuous (Bounded variation of $\sigma$ is equivalent to a path of finite length)
    \item Collision-free path, if it is a path, and $\sigma(b) \in \mathbb{X}_{\text {free }},$ for all $b\in[0,1]$
    \item Feasible path, if it is a collision-free path, $\sigma(0)=\boldsymbol{X}_{\text {init }},$ and $\sigma(1)=\boldsymbol{X}_{\text{goal}}$ 
\end{itemize}
\end{definition}

Let $\mathfrak{S}$ denote the set of all collision-free paths in the given configuration space. We define the norm $\|{\sigma}\|_{\text{BV}}=\mathrm{TV}(\sigma)+\int_{0}^{1}|\sigma(t)|dt$. The topology on the space $\mathfrak{S}$ is derived from the metric induced by this norm. Define a cost function $c:\mathfrak{S} \rightarrow \mathbb{R}_{\geq0}$. We assume that the cost function increases with the length of path, that is $c(\sigma_1) > c(\sigma_2)$ if $\mathrm{TV}(\sigma_1) > \mathrm{TV}(\sigma_2)$. 
Further note that a path $\sigma:[0,1]\rightarrow \mathbb{X}$ is said to have 
 \begin{itemize}
     \item strong $\delta$-clearance if for each $s \in [0,1]$,  $\mathcal{B}_{\delta}(\sigma(s))\subseteq \mathbb{X}_{\text{free}}$ where $\mathcal{B}_{r}(P)=\{x \mid \|x-P\|< r\}$. 
     \item  weak $\delta$-clearance if there exists a homotopy, i.e. a continuous function $\psi:[0,1]\rightarrow \mathfrak{S}$ with $\psi(0)=\sigma$ and $\psi(1)=\sigma_{1}$ where $\sigma_1$ has strong $\delta$-clearance. Further if $\alpha \in [0,1]$ then the path $\psi(\alpha)$ has strong $\delta_{\alpha}$-clearance for some $\delta_{\alpha}>0$.
 \end{itemize}
 
We also assume the following:
\begin{itemize}
    \item There exists a feasible path $\sigma$ between the start and goal points with strong $\epsilon$-clearance for some $\epsilon >0$.
    \item There exists a path $\sigma^\star$ with weak $\delta$-clearance for some $\delta>0$ and optimal cost $c^\star$, such that if there exists a sequence $(\sigma_n)_{n=1}^{\infty}$ with $\underset{n\to\infty}{\lim} \sigma_n = \sigma^\star$, then $ \underset{n\to\infty}{\lim} c(\sigma_n) = c^\star$.
\end{itemize}

A path planning problem entails finding a feasible path in an environment characterized by the triplet $\left(\mathbb{X}_{\text {free}}, \boldsymbol{X}_{\text {init}}, \boldsymbol{X}_{\text {goal }}\right)$. 
  
\subsection{Probabilistic Completeness}
Sampling-based motion planning algorithms  rely on uniform random sampling from the free-space in each step. Consequently, it is difficult to provide deterministic guarantees on the success of the algorithm. We can however study the claim that success in finding a feasible path from $\boldsymbol{X}_\text{init}$ to $\boldsymbol{X}_\text{goal}$ is guaranteed as the number of iterations of the algorithm increases to infinity. This constitutes the fundamental idea of probabilistic completeness of a algorithm.

We define a few preliminary quantities

\begin{itemize}
    \item  We denote the graph generated by an algorithm after $n$ steps  ${G}_n=(\mathbb{V}_n,\mathbb{E}_n)$, where $\mathbb{V}_n$ and $\mathbb{E}_n$ are the vertex and edge sets, respectively.
    \item Let $\mathcal{P}_n$ be an indicator random variable that takes value one indicating the event that there is a connected path between $\boldsymbol{X}_{\text{init}}$ and $\boldsymbol{X}_{\text{goal}}$ in ${G}_n$.
\end{itemize}
The  algorithm can be said to possess the probabilistic completeness property if 
\begin{align}
     \underset{n\to\infty}{\limsup}\;\mathbb{P}(\mathcal{P}_n=0)=0 
     \label{eq:prob_compldefn}
\end{align}
where, $\mathbb{P}$ denotes probability of an event.
\subsection{\label{defasyopt}Asymptotic Optimality}
Let $Y_n$ denote the cost of the least cost solution returned by a sampling-based motion planning algorithm in $n$ iterations. We define $c^\star=\text{inf} \{c(\sigma) : \text{$\sigma$ is a feasible path}\}$. 
An algorithm is said to be asymptotically optimal if 
\begin{equation}
    \mathbb{P}(\{\underset{n\to\infty}{\limsup}\;{Y_n}= c^{\star}\})=1
\end{equation}
\subsection{Problem Statement} \label{subsec:problem statement}
In this paper, first the GSE algorithm, which was proposed for 2-D and 3-D environments in \cite{gse,3d_gse_journal} respectively, is extended to a general $d$ dimensional configuration space. Then, probabilistic completeness of the GSE algorithm is to be studied first followed by an analysis on cost of paths returned by the GSE algorithm.

If the GSE algorithm is not found to be asymptotically optimal, then an amended version of the GSE that would satisfy both the probabilistic completeness and asymptotic optimality is to be presented and analyzed.

\section{Generalized Shape Expansion (GSE) Algorithm} \label{sec:GSE description}
  Let there be $m$ obstacles in configuration space $\mathbb{X}$. We assume that the points on the obstacle boundary in $\mathbb{X}$ are known.
Let the set of points on the $i^{\text{th}}$ obstacle be denoted by $\mathbb{Q}_i$, where $i\in\{1,\dots m\}$. Consider a point $\boldsymbol{X} \in \mathbb{X} \subset \mathbb{R}^{d}$. The minimum distance vector of  $\boldsymbol{X}$ from the $i^{\text{th}}$ obstacle is denoted by $\boldsymbol{r}_{i,X}$, and its magnitude is given by $\|\boldsymbol{r}_{i,X}\|$. Without loss of generality, obstacles are numbered such that $\|\boldsymbol{r}_{i,X}\|\leq \|\boldsymbol{r}_{j,X}\|$ for $i<j$. Let $\theta_{i,X}$ denotes the maximum angle made by the vector $\boldsymbol{r}_{i,X}$ with $(\boldsymbol{X}-\boldsymbol{x})$, where $\boldsymbol{x}\in \mathbb{Q}_i$. In other words,
\begin{align}
  \theta_{i,X}= \underset{\boldsymbol{x}\in \mathbb{Q}_{{i}}}\argmax\; \text{cos}^{-1}\left(\frac{(\boldsymbol{X}-\boldsymbol{x}).\boldsymbol{r}_{i,X}}{\|\boldsymbol{X}-\boldsymbol{x}\|\;\|\boldsymbol{r}_{i,X}\|}\right) \; i\in\{1,\dots,m\}
  \label{eqn:theta}
\end{align}
where $\boldsymbol{a}.\boldsymbol{b}$ denotes the dot product of vectors $\boldsymbol{a}$ and $\boldsymbol{b}$. Note that the formulation of the maximum angular spectrum covering $i^{\text{th}}$ obstacle as seen from a point $\boldsymbol{X}$ given above in Eq. \eqref{eqn:theta} is a generalization of the formulation presented in \cite{3d_gse_journal}. A sample depiction of $\theta_{i,X}$ and $\boldsymbol{r}_{i,X}$ for four obstacles in two-dimensional configuration space is shown in Fig. \ref{fig:generalized_shape_d}.
\subsection{\label{sec:GSE}Description of the GSE Algorithm}
\begin{algorithm}[]
\caption{GSE Algorithm}
\begin{algorithmic}[1]
\State ${\mathbb{V}\gets \{\boldsymbol{X_{\text{init}}}, \boldsymbol{X_{\text{goal}}}\}},\;\;\mathbb{E}\gets \emptyset$
\State ${\mathcal{S}_{\text{init}}(\boldsymbol{P})\gets \texttt{Shape}(\boldsymbol{P},\boldsymbol{X_{\text{init}}},\mathbb{X}_{\text{obs}})}$
\State ${\mathcal{S}_{\text{goal}}(\boldsymbol{P})\gets \texttt{Shape}(\boldsymbol{P},\boldsymbol{X_{\text{goal}}},\mathbb{X}_{\text{obs}})}$
\For{$j=1\dots n$}
\State $\boldsymbol{X_{\text{rand}}}\gets \texttt{GenerateSample}$
\State $\mathcal{S}_{\text{rand}}(\boldsymbol{P})\gets \texttt{Shape}(\boldsymbol{P},\boldsymbol{X_{\text{rand}}},\mathbb{X}_{\text{obs}})$
\State $\boldsymbol{X_{\text{nearest}}}\gets \texttt{Nearest}(\mathbb{G}=(\mathbb{V},\mathbb{E}),\boldsymbol{X_{\text{rand}}})$
\State $\mathcal{S}_{\text{nearest}}(\boldsymbol{P})\gets \texttt{Shape}(\boldsymbol{P},\boldsymbol{X_{\text{nearest}}},\mathbb{X}_{\text{obs}})$
\State $\boldsymbol{X_{\text{new}}}\gets \texttt{Steer-GSE}(\boldsymbol{X_{\text{nearest}}},\boldsymbol{X_{\text{rand}}})$
\State $\mathcal{S}_{\text{new}}(\boldsymbol{P})\gets \texttt{GSE-Shape}(\boldsymbol{P},\boldsymbol{X_{\text{new}}},\mathbb{X}_{\text{obs}})$
\State $\mathbb{X_{\text{intersect}}} \gets \texttt{IntersectShape}(G,\boldsymbol{X_{\text{new}}})$
\State $\mathbb{V} \gets \mathbb{V} \cup \{\boldsymbol{X_{\text{new}}},\boldsymbol{X_{\text{rand}}}\}$
\For{$\boldsymbol{X_{\text{n}}}\in \mathbb{X}_{\text{intersect}}$}
\State $\mathbb{E}\gets \mathbb{E} \cup \{(\boldsymbol{X_{\text{n}}},\boldsymbol{X_{\text{new}}})\}$
\EndFor
\State $\mathbb{G}=(\mathbb{V},\mathbb{E})$
\State $Path_{\text{init,goal,shortest}} \gets\texttt{MinPath}(G,\boldsymbol{X_{\text{init}}},\boldsymbol{X}_{\text{goal}})$
\EndFor
\end{algorithmic}
\label{algo:GSE_algo}
\end{algorithm}
\subsubsection{Initialization}

During initialization, the start and goal positions $\boldsymbol{X}_\text{init}$ and $\boldsymbol{X}_\text{goal}$, respectively, are added to the vertex set $\mathbb{V}$ with their corresponding parameters $\{\boldsymbol{r}_{{i,X}},\theta_{{i,X}}\}^m_{i=1}$. Values of these parameters are then used to compute the corresponding shape functions $\mathcal{S}_{\text{init}}(\boldsymbol{P})$ and $\mathcal{S}_{\text{goal}}(\boldsymbol{P})$, respectively, using the function $\texttt{Shape}(\boldsymbol{P},\boldsymbol{X},\mathbb{X}_{\text{obs}})$ for $\boldsymbol{X}=\boldsymbol{X_\text{init}}$ and $\boldsymbol{X_\text{goal}}$, respectively (Lines 2, 3 of Algorithm \ref{algo:GSE_algo}). The edge set $\mathbb{E}$ is set empty. This is shown in Lines 1-3 of Algorithm \ref{algo:GSE_algo}.

\subsubsection{Sample point} 

A random point $\boldsymbol{X}_\text{rand}\in \mathbb{X}_\text{free}$, is drawn from a uniform distribution such that the sequence of sampled points is independent and identically distributed
as shown in Line 5 of Algorithm \ref{algo:GSE_algo}.
\subsubsection{Nearest point} 

For a given $\boldsymbol{X}_\text{rand}$, the next step is to find a point $\boldsymbol{X}_\text{nearest}\in\mathbb{V}$, which is nearest to $\boldsymbol{X}_\text{rand}$ (see Line 7 of Algorithm \ref{algo:GSE_algo}).
\begin{equation}
\texttt{Nearest}(\mathbb{G}=(\mathbb{V},\mathbb{E}),\boldsymbol{X}_{\text{rand}} ) = \underset{\boldsymbol{X}\in \mathbb{V}}\argmin\|\boldsymbol{X}-\boldsymbol{X}_{\text{rand}}\|
\end{equation}

\begin{algorithm}[]
\caption{$\text{GSE Shape Function} \;(\texttt{Shape}(\boldsymbol{P},\boldsymbol{X},\mathbb{X}_{\text{obs}}))$}
\label{alg:shape_function}
\begin{algorithmic}[1]
\State ${\ell_0=0}$
\For{$i=1,2,\dots,m$}
\State $\ell_i\gets\texttt{sat}{(\text{cos}^{-1}(\frac{\boldsymbol{r_{i,X}}.{(\boldsymbol{P}-\boldsymbol{X})}}{\|\boldsymbol{r_{i,X}}\|\;\|{\boldsymbol{P}-\boldsymbol{X}}\|})-\theta_{i,X})}$
\State $f_i\gets\texttt{sat}(\theta_{i,X}-\texttt{cos}^{-1}(\frac{\boldsymbol{r_{i,X}}.{(\boldsymbol{P}-\boldsymbol{X})}}{\|\boldsymbol{r_{i,X}}\|\;\|{\boldsymbol{P}-\boldsymbol{X}}\|}))+\texttt{sat}(r_{i,X}-\|\boldsymbol{P}-\boldsymbol{X}\|)$
\State $g_{{i}}\gets f_{{i}}+{\sum_{j=1}^{i-1} \ell_j} -(i+1)$
\EndFor
\If{$\texttt{all}(\ell_{{i}})=1\;$} 
\State $\texttt{Shape}(\boldsymbol{P},\boldsymbol{X},\mathbb{X}_{\text{obs}})=0$
\Else
\State$\mathcal{S}_{\boldsymbol{X}}(\boldsymbol{P})=\texttt{Shape}(\boldsymbol{P},\boldsymbol{X},\mathbb{X}_{\text{obs}})\gets g_1*g_2*\dots g_m$
\EndIf
\end{algorithmic}
\end{algorithm}
\begin{figure*}[]
\captionsetup[subfigure]{justification=centering}
\centering
\begin{subfigure}{0.23\textwidth}
{\includegraphics[scale=0.08]{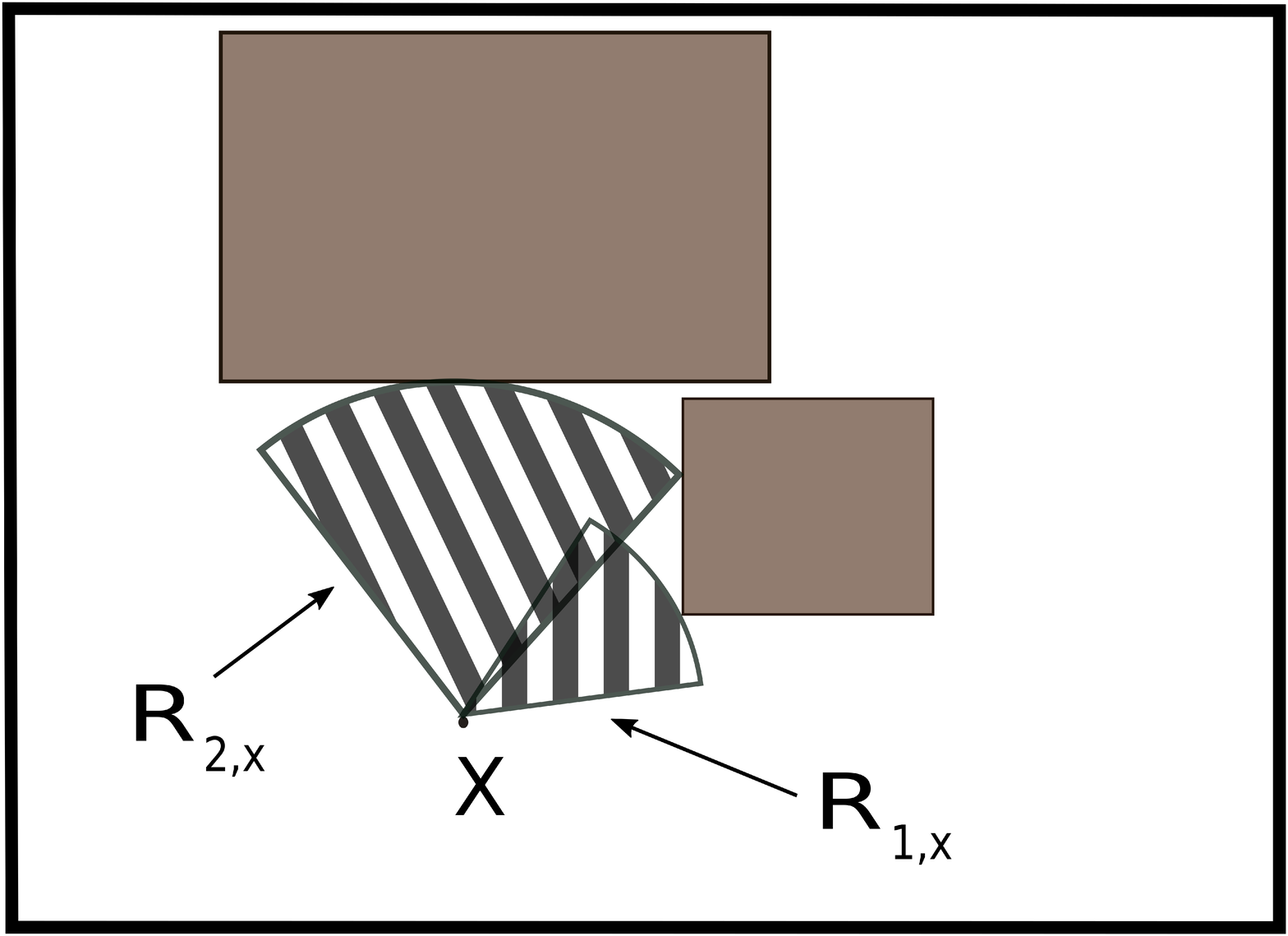}}
\caption{${\mathcal{R}_{1,\boldsymbol{X}}}$ and ${\mathcal{R}_{2,\boldsymbol{X}}}$ }
\label{fig:r1_and_r2}
\end{subfigure}
\begin{subfigure}{0.23\textwidth}
{\includegraphics[scale=0.08]{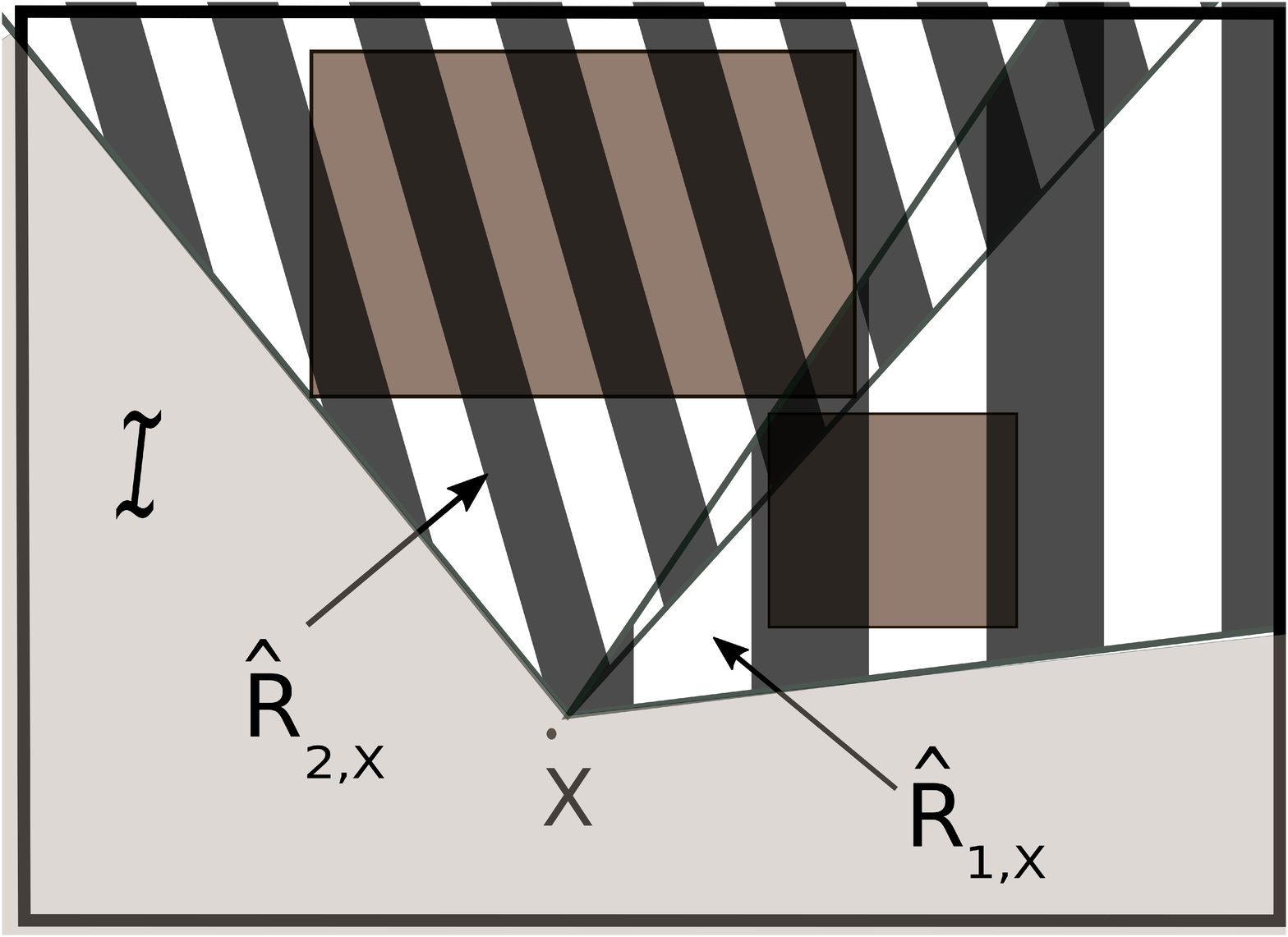}}
\caption{$\widehat{\mathcal{R}_{1,\boldsymbol{X}}}$, $\widehat{\mathcal{R}_{2,\boldsymbol{X}}}$ and $\mathcal{I}$}
\label{fig:r1_and_r2_hats}
\end{subfigure}
\begin{subfigure}{0.23\textwidth}
{\includegraphics[scale=0.08]{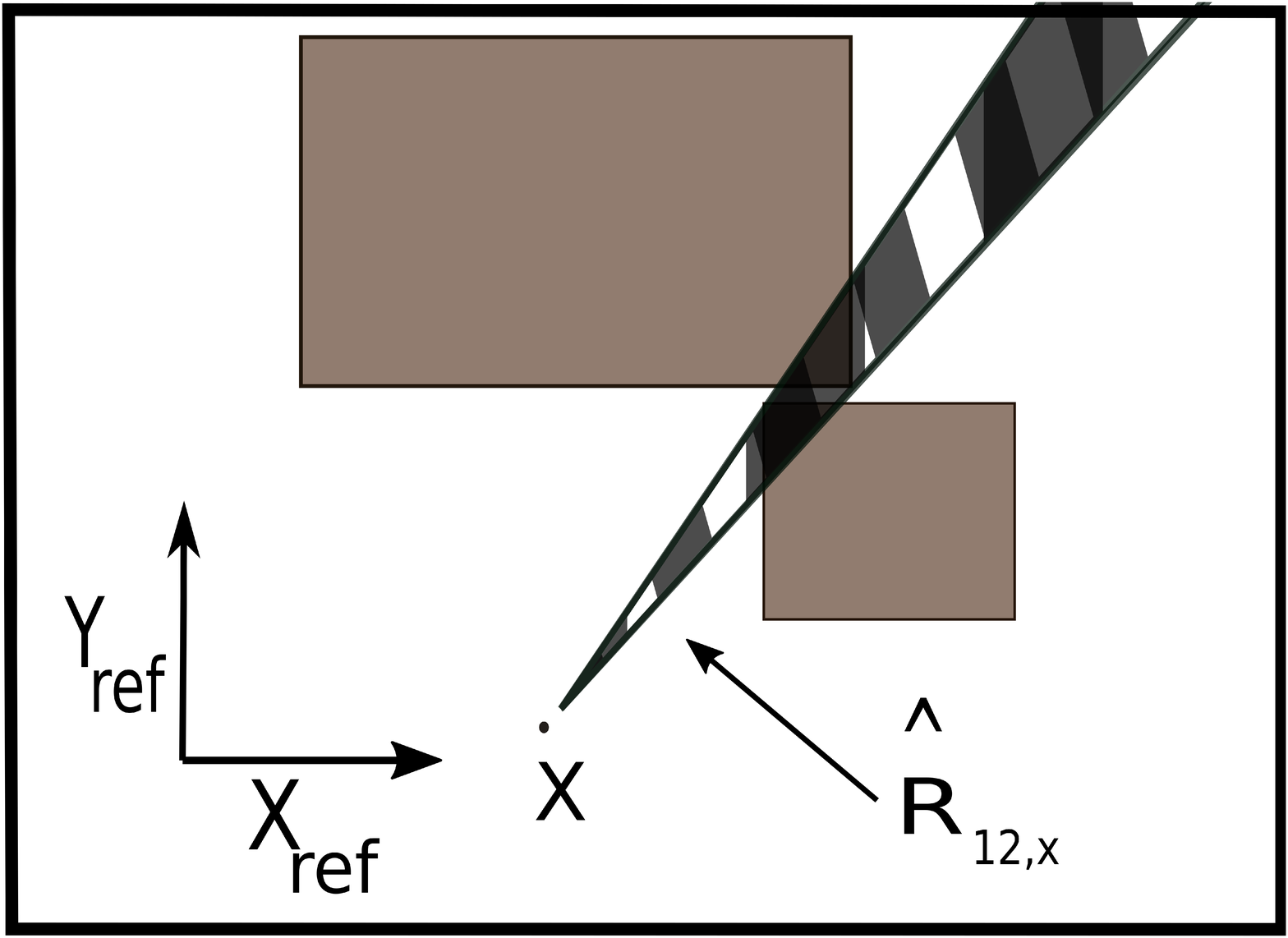}}
\caption{$\widetilde{\mathcal{R}_{12,\boldsymbol{X}}}$}
\label{fig:r12_tilde}
\end{subfigure}
\begin{subfigure}{0.23\textwidth}
{\includegraphics[scale=0.08]{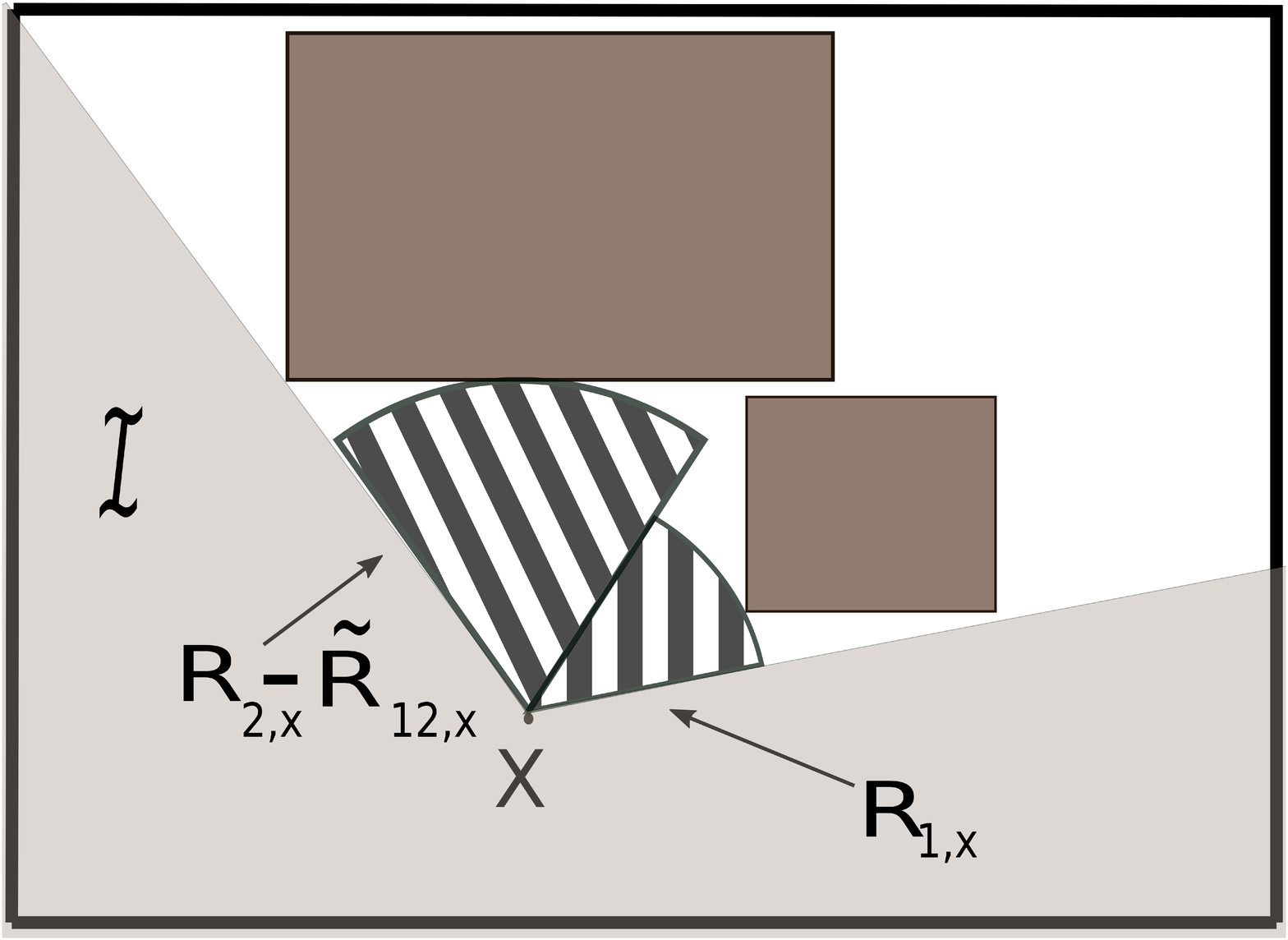}}
\caption{$\mathcal{S}_{\boldsymbol{X}}={\mathcal{R}_{1,\boldsymbol{X}}}\cup({\mathcal{R}_{1,\boldsymbol{X}}}-\widetilde{\mathcal{R}_{12,\boldsymbol{X}}})\cup\mathcal{I}$}
\label{fig:S_x}
\end{subfigure}
\begin{center}
\captionsetup[subfigure]{justification=centering}
\centering
\end{center}
\caption{An illustration of generalized shape about $\boldsymbol{X}$ in 2-D environment}
\label{fig:gse_2d_shape}
\end{figure*}

\subsubsection{Generalized shape generation}

The generalized shape $\mathcal{S}_{\boldsymbol{X}}$ generated about any point $\boldsymbol{X}$ is given by the $\texttt{Shape}$ function, which is detailed in Algorithm \ref{alg:shape_function}.
The function \texttt{Shape}$(\boldsymbol{P},\boldsymbol{X},\mathbb{X}_{\text{obs}})$  in Algorithm \ref{alg:shape_function} returns a value zero if a test point $\boldsymbol{P}$ is within the  generalized shape about the sampled point $\boldsymbol{X}$, and non-zero otherwise, that is,
\begin{align}
\mathcal{S}_{\boldsymbol{X}}({\boldsymbol{P}}) \left\{ \begin{array}{l}
=0;\quad\quad\text{if}\: {\boldsymbol{P}}\in\mathcal{S}_{\boldsymbol{X}}
\\
\neq 0,\quad\quad\text{otherwise}
\end{array}
\right.
\label{eq:P_in_ShapeX_condition}
\end{align}
Note that the $\texttt{sat}(.)$ function in Algorithm \ref{alg:shape_function} is given by  $\texttt{sat}({x})=1$ if $x\geq 0$ else $0$. 


\subsubsection{Steering $\boldsymbol{X}_\text{rand}$ to generate $\boldsymbol{X}_\text{new}$}($\texttt{Steer-GSE}(\boldsymbol{X_{\text{nearest}}},\boldsymbol{X_{\text{rand}}})$):

The randomly selected point $\boldsymbol{X}_\text{rand}$ is steered to a suitable point $\boldsymbol{X}_\text{new}$ as follows: 
\begin{itemize}
    \item If $\boldsymbol{X}_\text{rand}\in\mathcal{S}_{\boldsymbol{X}_\text{nearest}}$, then $\boldsymbol{X}_\text{new}$ is equal to $\boldsymbol{X}_\text{rand}$.
    \item 
    Otherwise, $\boldsymbol{X}_{\text{new}}$ is the point of intersection of the boundary of $\mathcal{S}_{\boldsymbol{X}_\text{nearest}}$ and the line joining $\boldsymbol{X}_{\text{nearest}}$ and $\boldsymbol{X}_{\text{rand}}$.
\end{itemize}
The point $\boldsymbol{X}_\text{new}$ thus computed is then added to the set $\mathbb{V}$. 

\subsubsection{Generation of new edges} 
In this step, first a set $\mathbb{X}_\text{intersect}$ is defined as $\mathbb{X}_\text{intersect}\triangleq \{\boldsymbol{X}\in\mathbb{V}|\mathcal{S}_{\boldsymbol{X}} \cap \mathcal{S}_{\boldsymbol{X}_\text{new}} \cap {\boldsymbol{X}\boldsymbol{X}_\text{new}} \neq \emptyset\}$.
Next, $\boldsymbol{X}_{\text{new}}$ is connected with the vertices $\boldsymbol{X}_\text{n} \in \mathbb{X}_{\text{near}}$ via newly generated edges $\boldsymbol{X}_\text{n} \boldsymbol{X}_\text{new}$, which are then added to the set $\mathbb{E}$. 

\subsubsection{Generating shortest path from $\boldsymbol{X}_\text{init}$ to $\boldsymbol{X}_\text{goal}$} Once a directed graph $\mathbb{G}=(\mathbb{V},\mathbb{E})$ connecting $\boldsymbol{X}_\text{init}$ and $\boldsymbol{X}_\text{goal}$ is found, using Dijkstra's algorithm, the minimum cost connected path is found from $\boldsymbol{X}_\text{init}$ to $\boldsymbol{X}_\text{goal}$.

\subsection{Probabilistic Completeness of the GSE Algorithm}\label{sec:probcomp}
We denote the vertex set and edge set of $\text{GSE}$ after $n$ iterations by $\mathbb{V}_{n}^{\text{GSE}}$ and $\mathbb{E}_{n}^{\text{GSE}}$, respectively.
 We indicate a summary of the  proof of  probabilistic completeness of the GSE algorithm. Further details are provided in \cite{probabilistic_completeness_gse}. We show this fact by establishing a few  properties of the GSE visibility function $f: 2^{\mathbb{X}_{\text{free}}} \rightarrow 2^{\mathbb{X}_{\text{free}}} $ defined as 
\begin{align}
   f(S) \triangleq \{\boldsymbol{X} \in \mathbb{X}_{\text {free }}\mid \mathcal{S}_{\boldsymbol{X}} \cap S \ne \emptyset\}
   \label{eq:visb_fn}
\end{align}
For the analysis of GSE, we study the iterates of this function $f^{(2)}(S)\triangleq f(f(S))$, $f^{(3)}(S)=f^{(2)} (f(S))$ etc. More precisely, we establish that
\begin{itemize}
    \item For any  path planning problem (conforming to the conditions laid down in our problem statement), there exists a finite constant $q>0$ such that $f^{(q)}(\boldsymbol{X}_{\text{goal}})=\mathbb{X}_{\text{free}}$.
    \item Let $\widehat{\mathbb{V}}_n^{\text{GSE}}$ denote the component of the GSE graph that contains the initial point $\boldsymbol{X}_{\text{init}}$. Then, the algorithm succeeds in finding a path from $\boldsymbol{X}_{\text{init}}$ to $\boldsymbol{X}_{\text{goal}}$ in $n$ iterations if $\widehat{\mathbb{V}}_n ^{\text{GSE}} \cap \mathbb{X}_{\text{free}} \neq \emptyset$.
    \item The probability that the vertex set of GSE fails to progress from the set $f^{(q)}(\boldsymbol{X}_{\text{goal}})$ to the set $f(\boldsymbol{X}_{\text{goal}})$ is bounded above by the tail probability of a Bernoulli random variable.
    \item Using Chernoff bounds, we show that the probability of failure of the GSE algorithm decays exponentially with the number of iterations of the algorithm.
\end{itemize}
Following this argument, we conclude that the GSE algorithm is probabilistically complete.

 \subsection{Analysis of Cost of Path Returned by the GSE Algorithm}\label{GSEasopt}
In this section, we analyse the cost of the paths returned by the GSE algorithm for investigating whether the GSE has asymptotic optimality property. In particular, we study performance of the algorithm in the  promenade problem, and conduct the analysis by defining a finite state machine called the Automaton of Sampling Diagrams (ASD) similar to the one defined in \cite{nechushtan2010sampling_diagram}. However, in this section, the automaton is modified to better suit the analysis of the GSE algorithm. 

\subsubsection{The promenade problem and the automaton of sampling diagrams for the GSE algorithm}

The promenade problem described below offers some perspective on how the GSE algorithm performs for certain problem types. The automaton we define  provides a way to study the critical events in the progress of the GSE algorithm. This helps determine the probability of achieving solutions of different qualities. For ease of understanding, here, the analysis is given for 2-D environments. However, it can easily be extended to more general settings of higher dimensional environments.

\subsubsection{Promenade problem}\label{sec:promnddesc}

The setting of this problem is as follows: Define $\mathbb{X}={[0,\alpha +2]}^2$ and $\mathbb{X}_{\text{obs}}=[1,\alpha +1]^2$. As usual we denote an arbitrary point by coordinates $(x,y)$. The initial and goal points are set to be on the left and right of the central obstacle. While the exact location need not be precise, we set $\boldsymbol{X}_\text{init}= (1-\epsilon,1+2\epsilon)$ and $\boldsymbol{X}_\text{goal}=(\alpha +1+ \epsilon,1+2\epsilon)$ where $\epsilon>0$ is much smaller than min$(\alpha,1)$. The aim is to generate a path between these two points. In this environment we define the following regions 
\begin{itemize}
\item $L_1=\{(x,y)\in \mathbb{X}| x+y\leq2 \}$
\item $L_2$ is the reflection of $L_1$ about the line $x= \alpha /2 +1$
\item $B_1=[0,1]\times [\alpha+1, \alpha +2]$
\item $B_2$ is the reflection of $B_1$ about the line $x= \alpha /2 +1$
\end{itemize}
A representative figure is given in Fig. \ref{fig:promenade}
We further classify all solutions to the promenade problem  into two types -
 \begin{itemize}
     \item Type-B solutions where the path returned by the algorithm  passes through $B_1$ and $B_2$, that is, passes through the region above the square obstacle.
     \item Type-L solutions where the path returned passes through $L_1$ and $L_2$,  that is, passes through the region  below the square obstacle.
 \end{itemize}
   It is immediately evident that Type-B solutions are of higher cost under the given conditions for the stated locations of the initial and goal points.
  
 \subsubsection {Automaton of sampling diagrams} \label{sec:defautomaton}
 
 \begin{figure}
     \centering
     \includegraphics[scale=0.2]{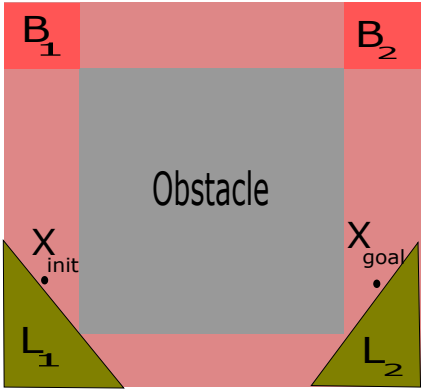}
     \caption{A schematic representation of the promenade problem}
     \label{fig:promenade}
 \end{figure}
 In the context of the problem above, we develop an automaton
to model the progress of the GSE algorithm. 
{Type-L and Type-B solutions are the two major types of solutions we wish to develop. Therefore, we consider two states $s_\text{accepting}$ and $s_\text{rejecting}$, associated with Type-L and Type-B solutions   respectively. These states are by necessity 'sink' states. Notice that the automaton begins with the state $s_{\text{init}}$. We are concerned primarily with the progress of the component of the GSE graph containing the initial point. Consequently, in order to better study the GSE graph, we define two interim states $s_1$ and $s_2$. Progress from $s_{\text{init}}$  to $s_{\text{accepting}}$ through the two interim states gives us a high cost solution, and we 'reject' all paths that might lead to low cost solutions via the rejecting state. Developing this heuristic, we have the following definition}
 \begin{definition} \label{def:asd}
    An automaton of sampling diagrams is a finite state machine with five states.The states are $s_{\text{init}}, s_{1} ,s_{2},s_{\text{accepting}}, s_{\text{rejecting}}
 $. The machine takes as input a point $\boldsymbol{X}$ sampled from the free-space. Based on the location of the point, decisions are made as to the progress of the automaton. Two sets $F_i$ and $R_i$ are affiliated with each state $s_{i}$, where $i \in \{1,2,\text{init},\text{rejecting},\text{accepting}\}$. The states are ordered $s_{\text{init}}\rightarrow s_{1}\rightarrow s_{2}\rightarrow s_{\text{accepting}}
 $. The state $s_\text{rejecting}$ is outside of this structure. The operations of the automaton are then described as below,
  \begin{itemize}
     \item If the automaton reaches a rejecting or an accepting state, then it does not progress further.
     \item Let the automaton be in a regular (neither accepting nor rejecting) state $s_i$. Suppose a point $\xi$ is sampled. If the  next vertex added to the graph generated by the GSE algorithm using the point $\xi$  lies in $R_i$ then the automaton moves to the rejecting state. 
     \item If the next vertex added to the graph lies in the region $F_i$, then the automaton progresses to the next state.
     \item If the vertex added to the graph of GSE is in neither $F_i$ nor in $R_i$, then the automaton stays in the same state.
 \end{itemize}
 \end{definition}

By suitably defining the regions $R_i$ and  $F_i$, we can classify the solutions returned by the GSE algorithm. The accepting state is associated with solutions that are of high cost. This motivates us to define these regions as:
 \begin{itemize}
     \item For all states $R_{i}=L_{1}$. 
     \item $F_{\text{init}}$ is a $\gamma \times \gamma$ square, with $\gamma$ a very small postive number. This square shares its  upper right corner with $B_1$ and is homothetic to it.
     \item $F_{1} \subset [1,\alpha+1]\times [\alpha +1,\alpha+2]$ is a sufficiently small  region that is parallel to $F_{\text{init}}$. We require that $F_2$ lies entirely within the shape of every point of $F_1$.
     \item $F_2$ is the mirror image of  $F_{\text{init}}$ about the line $x=\alpha/2 +1$
   \end{itemize}

A sample illustration of the regions $F_i$ (the forward regions) and $R_i$ is provided in in Fig. \ref{fig:asd}.

 \begin{figure}\label{fig:forwardregions}
     \centering
     \includegraphics[scale=0.2]{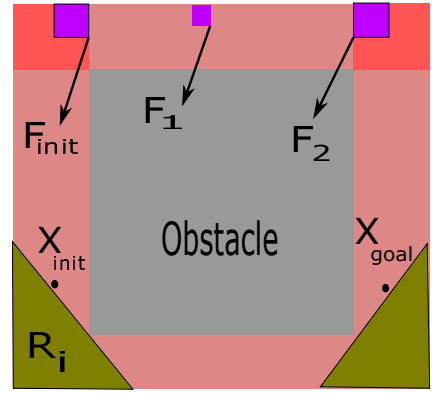}
     \caption{ An illustration of the regions $F_i$ and $R_i$}
     \label{fig:asd}
 \end{figure}
 
\subsubsection{Low-quality solutions in the GSE algorithm}
We analyse the solutions that the  GSE algorithm returns  for arbitrarily high number of iterations. This then allows us to conclude whether the GSE algorithm is asymptotically optimal or not. The reasoning for this is provided below. Note that $Y_n$ is defined as in Section \ref{defasyopt}

The proof below proceeds in two parts. Firstly, we prove that the automaton provides a schematic for the GSE algorithm in some sense. More precisely, it reaches certain states only when the GSE algorithm returns certain solutions. Paths returned by the GSE, passing through the regions $B_i$ are not optimal, we associate these paths with the accepting state of the automaton, thereby if we show that the probability that the automaton reaches the accepting state is bounded below, then we have the desired result. Before we prove this, we define a few quantities. 
\begin{definition}\label{def:swathregions}
   Associated with each (non-rejecting) state $s$ of the automaton, we define two regions ${H}^{+}(s)$ and ${H}^{-}(s)$ as follows.
   \begin{itemize}
    \item ${H}^{-}(s)=L_1$ for all non-rejecting states $s$
    \item$H^{+}(s_{\text{init}})=\emptyset$ 
    \item $H^{+}(s_{1})$ = $F_{\text{init}}$
    \item $H^{+}(s_{2})=F_1$
    \item $H^{+}(s_{\text{accepting}})=F_2$
    \end{itemize}
\end{definition}

We denote by $\mathcal{E}_{i}$ the swath of GSE, that is, the union of line segments that constitute its edges after $i$ iterations.

\begin{lemma}\label{theorem:automaton is good}
For all $\alpha\geq 2$, for any $\boldsymbol{X}_\text{init}$ in between $L_1$ and $B_1$ in the free space, and $\boldsymbol{X}_\text{goal}$ between $L_2$ and $B_2$, The automaton described in Section \ref{sec:defautomaton} moves to 
\begin{itemize}
    \item A rejecting state on all inputs(samples) if 
    GSE returns a Type-L solution
    \item  An accepting state only  if 
    GSE returns a Type-B solution
\end{itemize}       
\end{lemma}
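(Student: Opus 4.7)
My plan is to prove the two implications separately, tying each non-sink state of the automaton to a specific geometric feature of the graph $G_n$ produced by GSE. The topological fact I lean on is that in the promenade environment any continuous curve in $\mathbb{X}_{\text{free}}$ from $\boldsymbol{X}_{\text{init}}$ to $\boldsymbol{X}_{\text{goal}}$ must pass either through the lower corridor containing $L_1$ (yielding a Type-L path) or through the upper corridor $B_1 \cup ([1,\alpha+1]\times[\alpha+1,\alpha+2]) \cup B_2$ (yielding a Type-B path), and these two corridors are disjoint.

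For the first bullet, I would start from a Type-L output of GSE and note that the returned path, being a concatenation of edges of $G_n$, must contain at least one vertex inside $L_1$. Let $k \leq n$ be the earliest iteration at which Steer-GSE places a new vertex in $L_1$. By minimality no prior sample has moved the automaton to $s_{\text{rejecting}}$. I would then argue that the automaton has not reached $s_{\text{accepting}}$ by iteration $k-1$ either: reaching $s_{\text{accepting}}$ forces an upper-route chain to exist (established in the second bullet) which, via the Steer-GSE projection step, prevents any subsequent sample in $L_1$ from producing a new vertex in $L_1$, contradicting the existence of iteration $k$. Hence at iteration $k$ the automaton sits in one of $s_{\text{init}}, s_1, s_2$, and because the new vertex lies in $R_i = L_1$ the transition rule of Definition \ref{def:asd} immediately moves it to $s_{\text{rejecting}}$, where it remains.

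For the second bullet, I would suppose the automaton reaches $s_{\text{accepting}}$ at iteration $k$ and unwind the history: there must be iterations $k_0 < k_1 < k_2 = k$ at which vertices $v_0, v_1, v_2$ were inserted in $F_{\text{init}}, F_1, F_2$ respectively, and no vertex was inserted in $L_1$ at or before iteration $k$. Using the placement of $F_{\text{init}}$ in the upper-right of $B_1$, of $F_2$ in the upper-left of $B_2$, and the hypothesis that $F_2$ lies entirely within the generalized shape of every point of $F_1$, together with analogous visibility checks for the pairs $(\boldsymbol{X}_{\text{init}}, v_0)$, $(v_0, v_1)$ and $(v_2, \boldsymbol{X}_{\text{goal}})$ enforced by choosing $\gamma$ small, I would conclude that the IntersectShape step of Algorithm \ref{algo:GSE_algo} has already inserted edges
\[
\boldsymbol{X}_{\text{init}} \,\text{--}\, v_0 \,\text{--}\, v_1 \,\text{--}\, v_2 \,\text{--}\, \boldsymbol{X}_{\text{goal}},
\]
forming a Type-B path inside the upper corridor. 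A second step is to show that no later iteration introduces a Type-L path: any sample falling in $L_1$ after iteration $k$ has its $\boldsymbol{X}_{\text{nearest}}$ somewhere on the existing graph, and the generalized shapes of those vertices fail to reach past the obstacle into $L_1$, so Steer-GSE returns a point outside $L_1$. Consequently, the minimum-cost path returned by Dijkstra is the Type-B chain, proving the ``only if''.

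\textbf{Expected main obstacle.} The technical heart of both bullets is the ``lock-in'' claim that once the upper-route $F$-chain is in place, no subsequent vertex can be placed in $L_1$. Establishing this requires an explicit analysis of the generalized shapes of the chain's vertices and of $\boldsymbol{X}_{\text{init}}, \boldsymbol{X}_{\text{goal}}$, showing that their union is disjoint from $L_1$; this in turn involves careful bookkeeping of $\boldsymbol{r}_{1,v}$ and $\theta_{1,v}$ for $v$ near each corner of the square obstacle, and exploiting the freedom to take the side length $\gamma$ of $F_{\text{init}}$ arbitrarily small.
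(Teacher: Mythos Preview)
Your plan diverges from the paper's chiefly through the ``lock-in'' claim --- that once the automaton has accepted, \texttt{Steer-GSE} can never again place a vertex in $L_1$. The paper neither states nor needs this, and the claim is false. The paper's argument is lighter: for the first bullet it notes that $L_1^{c}=\{(x,y):x+y>2\}$ is a half-plane, hence convex, so any edge with both endpoints outside $L_1$ stays outside $L_1$; a Type-L path therefore forces some GSE vertex into $L_1$, which by the transition rules sends the automaton to $s_{\text{rejecting}}$. For the second bullet the paper replaces your explicit chain $v_0,v_1,v_2$ by a short induction (Proposition~\ref{prop:non_rejecting_state_condition}) showing that whenever the automaton is in a non-rejecting state $s$ the swath $\mathcal{E}_n$ already meets $H^{+}(s)$; at $s_{\text{accepting}}$ this puts a vertex in $F_2$, visible to $\boldsymbol{X}_{\text{goal}}$ by construction, while the swath avoids $L_1$ because no $L_1$-vertex was read before acceptance. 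No forward-in-time lock-in is invoked.

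The lock-in itself breaks down concretely. After acceptance, $\boldsymbol{X}_{\text{init}}=(1-\epsilon,1+2\epsilon)$ is still in $\mathbb{V}$ and is the nearest existing vertex to a sample such as $(\tfrac12,\tfrac12)\in L_1$: your chain vertices $v_0,v_1,v_2$ all have $y\ge\alpha+1\ge3$ and $\boldsymbol{X}_{\text{goal}}$ has $x\ge\alpha+1$, each at distance exceeding $2$ from $(\tfrac12,\tfrac12)$, whereas $\boldsymbol{X}_{\text{init}}$ lies within distance $1$. The single obstacle sits to the right of $\boldsymbol{X}_{\text{init}}$, so $\boldsymbol{r}_{1,\boldsymbol{X}_{\text{init}}}$ points along the positive $x$-axis and the blocked cone opens rightward; the direction from $\boldsymbol{X}_{\text{init}}$ to $(\tfrac12,\tfrac12)$ is down-left, outside that cone, hence $(\tfrac12,\tfrac12)\in\mathcal{S}_{\boldsymbol{X}_{\text{init}}}$ and \texttt{Steer-GSE} returns $\boldsymbol{X}_{\text{new}}=(\tfrac12,\tfrac12)\in L_1$. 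The same reasoning applies to any non-$L_1$ vertex that intermediate iterations may have placed near $L_1$. Since both bullets of your proposal rest on the lock-in, this is a genuine gap; drop it and follow the paper's convexity-plus-induction route.
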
 
\begin{proof}
 Consider any two  arbitrary points $X_1,X_2  \in L_1^{c}$. Then, the line segment joining them does not pass through $L_1$ (with $L_1$ as in Fig. \ref{fig:promenade}) 
 Consequently, in order to have a Type-L path, we must have a vertex of GSE in $L_1$. But this moves the automaton to a rejecting state, by definition. The first part of the lemma is proved. Now, for the second part of the lemma, consider the following proposition.
\begin{prop}
Let a sequence of vertices $\xi_{1}\xi_{2}\dots\xi_{n}$ and swath $\mathcal{E}_n$ be generated by the GSE algorithm and read by the automaton. For each non-rejecting state  $s$, if $H^{+}(s)\neq \emptyset$, then  $\mathcal{E}_{n} \cap H^{+}(s) \neq \emptyset$.
\label{prop:non_rejecting_state_condition}
 \end{prop}
\begin{proof}
We prove this claim by induction on $n$. 
 
For $n=1$, if $\xi_1 \in F_{\text{init}}$ then the automaton progresses to state $s_1$. Since $H^{+}(s_{1})=F_{\text{init}}$, the claim is true.  

Next, we assume that the claim holds for some $n \in \mathbb{N}$ and the automaton is now in state $s_i$. Therefore, $H^{+}(s_i) \cap \mathcal{E}_n \neq \emptyset$. By statement of the proposition, $s_i$ is non-rejecting. The automaton reads the vertex $\xi_{n+1}$. In the event that the automaton does not progress, $H^{+}(s)$ remains fixed, referring to the induction hypothesis, the claim holds trivially. If the next added vertex lies in $F_i$, the automaton moves forward to the next state $s_{j}$, by Def. \ref{def:asd}. Since $s_j \neq s_{\text{init}}$, $H^{+}(s)$ is the same as $F_i$ (referring to  Def. \ref{def:swathregions}). we have $H^{+}(s_j) \cap \mathcal{E}_{n+1} \neq \emptyset $. This proves the claim in Proposition \ref{prop:non_rejecting_state_condition}.
\end{proof}
  
Lastly, we conclude that if the automaton reaches the accepting state on an input $\xi_{1}\xi_{2}\dots\xi_{n}$, then we can conclude that a vertex of the GSE graph lies in $F_2$. By suitably defining the region $F_2$, we ensure that $\boldsymbol{X}_{\text{goal}}$ (the goal point) is in the shape of points in $F_2$, and vice versa. Then the GSE algorithm will end on the $({n+1})^{\text{th}}$ iteration. Further, the swath of GSE will intersect $B_1$, $B_2$, and cannot intersect $L_1$. Consequently, the type of solution will be Type-B, a high cost sub-optimal solution. This proves the second part of lemma.
 \end{proof}

The next lemma shows that the Automaton of Sampling Diagrams achieves an accepting state with a probability that is bounded below, for an arbitrarily high number of iterations. 

\begin{lemma} \label{theorem:main asymptote}
If the sets $F_{i}$ in the definition of the automaton have a positive volume, then there exists a  constant $\pi_0>0$, and a natural number $N$ such that for all $n\geq N$, 
$\mathbb{P}(\text{Accept})\geq \pi_0$,
where, 'Accept' is the event that the automaton enters the accepting state after reading $n$ samples.
\end{lemma}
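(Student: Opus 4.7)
The plan is to treat the automaton as a finite, absorbing Markov-type process whose transitions are driven by successive GSE iterations, and to show that the probability of absorption in $s_{\text{accepting}}$ is bounded away from zero once $n$ is large. In any non-rejecting state $s_i$, a single iteration produces exactly one of three outcomes -- forward (to the successor state), reject (to $s_{\text{rejecting}}$), or stay -- so the task reduces to exhibiting a uniform lower bound $p^{F}_{i} > 0$ on the per-iteration forward-transition probability at each of $s_{\text{init}}, s_{1}, s_{2}$, and then invoking a standard race/absorption argument.

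For the forward bounds, a uniformly drawn $\boldsymbol{X}_{\text{rand}}$ lies in the relevant target region (namely $F_{\text{init}}$, $F_{1}$, or $F_{2}$ at the respective state) with probability at least $|F_i|/|\mathbb{X}_{\text{free}}| > 0$ by the positive-volume hypothesis. Conditioned on the automaton being in state $s_{i}$, Proposition \ref{prop:non_rejecting_state_condition} from the proof of Lemma \ref{theorem:automaton is good} guarantees that the swath $\mathcal{E}_{n}$ has already touched $H^{+}(s_{i})$, so at least one vertex of the GSE graph resides in the predecessor forward region. By the design of the $F_{i}$'s -- in particular, the requirement that each successor forward region lie inside the generalized shape of every point of its predecessor -- that vertex's shape entirely contains the successor forward region, and the steering step then places $\boldsymbol{X}_{\text{new}}$ inside the successor forward region with positive probability bounded below by a constant that does not depend on $n$. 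Writing $p^{R}_{i}$ for the per-iteration reject probability at $s_{i}$, the probability of ever moving forward from $s_{i}$ before rejection is $p^{F}_{i}/(p^{F}_{i}+p^{R}_{i}) > 0$, and composing the three forward transitions yields a limiting acceptance probability
\begin{equation*}
\pi_{\infty} \;:=\; \prod_{i \in \{\text{init},\,1,\,2\}} \frac{p^{F}_{i}}{p^{F}_{i}+p^{R}_{i}} \;>\; 0.
\end{equation*}
Since the event ``Accept within $n$ iterations'' is monotonically nested in $n$, the finite-horizon acceptance probability converges up to $\pi_{\infty}$, so choosing any $\pi_{0} \in (0,\pi_{\infty})$ supplies the required threshold $N$.

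The main obstacle is securing the uniform lower bound on $p^{F}_{i}$ without it deteriorating as the graph grows. The difficulty is that once many vertices are present, the nearest vertex to a sample in the target forward region need not be one whose generalized shape contains the sample; in that case the steering step clips $\boldsymbol{X}_{\text{new}}$ to a shape boundary that could lie outside the target region. The intended remedy is to restrict attention to a sub-event in which $\boldsymbol{X}_{\text{rand}}$ falls in a strict interior portion of the target region, chosen so that every line of approach from any feasible graph vertex to that sub-region remains inside the target region, and to exploit the shape-containment requirements on the $F_{i}$'s to ensure that at least one admissible nearest vertex exists throughout. Once this purely geometric control is in place, the race/absorption argument is entirely routine.
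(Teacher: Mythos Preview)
Your approach is essentially the same as the paper's: both treat the automaton as a finite absorbing chain, argue that each forward transition has probability bounded away from zero because the $F_i$ have positive measure, and then multiply the conditional forward-before-reject probabilities $p^{F}_i/(p^{F}_i+p^{R}_i)$ across the three transient states to obtain a strictly positive limiting acceptance probability. The paper's write-up is terser---it asserts the per-state forward probability is bounded below ``by virtue of the geometry of the problem'' and then takes the product---but the skeleton is identical to your race/absorption argument.

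The one substantive difference is that you explicitly flag, as ``the main obstacle,'' the possibility that the nearest existing vertex to a sample in $F_i$ may have a generalized shape that fails to contain the sample, so that steering could deposit $\boldsymbol{X}_{\text{new}}$ outside $F_i$. The paper does not address this point beyond the phrase quoted above; in effect it assumes the $F_i$ have been chosen small enough and positioned so that this pathology does not occur (recall the stipulation that $F_2$ lies inside the shape of every point of $F_1$, and the analogous relations implicit for the other pairs). Your proposed remedy---passing to an interior sub-region and using the shape-containment requirements---is in the same spirit, though as you note it still requires a geometric argument that the \emph{nearest} vertex, not merely some vertex, behaves well. Neither your write-up nor the paper's fully closes this; both rely on the specific construction of the $F_i$ in the promenade geometry to make the bound go through.
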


\begin{proof}
We begin by assuming the automaton is in a state $s_j$, where $j \notin \{\text{accepting},\text{rejecting}\}$. We define 
\begin{itemize}
    \item an event $\mathfrak{F}(s_j)$ as the event that the next sample is in a region which moves the automaton forward.
    \item an event $\mathfrak{R}(s_j)$ as the event that the  next sample is in a region which moves the automaton to a rejecting state.
\end{itemize}
We call these events as 'critical events'. The probability that the automaton moves forward (denoted $\pi(s_j)$), given the occurrence of a critical event can be given by 
\begin{equation}
    \pi(s_j) \triangleq \quad\mathbb{P}(\mathfrak{F}(s_j)|\mathfrak{F}(s_j) \cup  \mathfrak{R}(s_j)) = \frac{\mu(F_j)}{\mu(F_j)+\mu(R_j)} 
    \end{equation}
    
Here, $\mu$ is the Lebesgue measure.
{The above equation allows us to draw two conclusions. Firstly, given a sufficiently large input size, the automaton is guaranteed to enter either an accepting or a rejecting state. The set of states forms a finite state Markov Chain with $s_{\text{accepting}}$ and $s_{\text{rejecting}}$ as 'trap' states, and hence for arbitrarily large input sizes, the automaton is guaranteed to visit one of these. Consequently we can conclude that there exists a minimal $N$ such that for all $n>N$  after reading $\xi_{1},\xi_{2},\dots\xi_{n}$. 
\begin{equation}
    \mathbb{P}(\text{reached accepting or rejecting state})\geq 1-\delta
\end{equation}
Secondly, the probability of progressing from one state to the next is non-zero. Let the probability of progressing from state $s_j$ to the next be bounded below by some $p_j>\frac{\mu(+
F_j)}{\mu(\mathbb{X}_{\text{free}})}$, by virtue of the geometry of the problem. Denote $\pi_1 = \prod_{j} p_j$.}
 We then have that 
  \begin{equation}
  \quad\mathbb{P}(\text{Accept}) \geq (1-\delta)(\pi_1) = \pi_{0}
  \end{equation}
 Thus, the lemma follows.
\end{proof}
Thus, we have shown that the Automaton of Sampling Diagrams moves to an accepting state, with a probability that is bounded below, for an arbitrarily high number of iterations. Further by Lemma \ref{theorem:automaton is good}, we know that the automaton reaches an accepting state only if a Type-B solution is returned by the GSE algorithm. Hence the probability of returning a Type-B solution as defined in Section \ref{sec:promnddesc}
 is bounded below by a positive constant. This leads us to the following result that shows the GSE algorithm is not asymptotically optimal.
\begin{theorem}\label{thm:gseasyopt}
With $Y_n, c^{\star}$ defined as in Section \ref{defasyopt}, 
  \begin{equation} \label{equation: asymptotic fail}
     \quad\mathbb{P}(\{\underset{n\to\infty}{\limsup}\;{Y_n}= c^{\star}\})=0
 \end{equation} 
\end{theorem}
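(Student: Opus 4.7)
My plan is to combine the conclusions of Lemmas \ref{theorem:automaton is good} and \ref{theorem:main asymptote} to obtain a positive lower bound on the probability of returning a high-cost solution, and then upgrade this to the target probability-zero statement via a zero-one law.

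First I would isolate the cost gap inherent to the promenade geometry. Because the central obstacle $[1,\alpha+1]^2$ forces every Type-B path to detour above through $B_1$ and $B_2$ whereas $c^\star$ is realized by a Type-L path skirting the obstacle below, a direct length calculation provides a constant $c_B^{\min}>c^\star$ that lower-bounds the cost of every Type-B path. Set $\Delta := c_B^{\min}-c^\star>0$. By Lemma \ref{theorem:main asymptote}, for every $n\geq N$ the automaton is in the accepting state with probability at least $\pi_0>0$; by Lemma \ref{theorem:automaton is good}, on this accepting event the GSE graph admits only a Type-B connecting path at the acceptance time, with no vertex placed in $L_1$, so $Y_n\geq c^\star+\Delta$ at that instant. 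Since the accepting state is absorbing, this yields the asymptotic bound $\mathbb{P}(\limsup_n Y_n\geq c^\star+\Delta)\geq \pi_0$, and hence $\mathbb{P}(\limsup_n Y_n = c^\star)\leq 1-\pi_0<1$.

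The main obstacle is promoting this strict inequality to the exact statement $\mathbb{P}(\limsup_n Y_n = c^\star)=0$. My plan is to apply the Hewitt--Savage zero-one law to the i.i.d.\ sample sequence $(\xi_n)_{n\geq 1}$: the event $E:=\{\limsup_n Y_n = c^\star\}$ concerns only the asymptotic behavior of the sample sequence, so it should be invariant under finite permutations of $(\xi_n)$ and therefore belong to the exchangeable $\sigma$-algebra, forcing $\mathbb{P}(E)\in\{0,1\}$. Combined with the preceding bound this gives $\mathbb{P}(E)=0$. The delicate technical step is verifying the exchangeability of $E$: although GSE is an online algorithm whose intermediate graph structure depends on insertion order, one must show that the limiting minimum path cost is unaffected by permuting a finite prefix of the samples. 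If this invariance is not directly available, a fallback would be a geometric persistence argument showing that, on the accepting event, no vertex subsequently sampled in $L_1$ can, under the shape-intersection connection rule, be incorporated into a Type-L path whose cost tends to $c^\star$, thereby yielding $\liminf_n Y_n\geq c^\star+\Delta$ almost surely on the accepting event, which together with an analogous handling of the complementary event would suffice.
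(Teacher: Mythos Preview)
Your approach is essentially the same as the paper's: establish $\mathbb{P}(\limsup_n Y_n = c^\star)<1$ from Lemmas \ref{theorem:automaton is good} and \ref{theorem:main asymptote}, then invoke a zero--one law to force the probability to $0$. The only substantive difference is the choice of zero--one law. The paper uses Kolmogorov's law, asserting that $\{\limsup_n Y_n = c^\star\}$ is a \emph{tail} event of the i.i.d.\ sample sequence, and it does not verify this directly but instead cites Lemma~25 of \cite{sertac_karaman2011sampling_rrt_star} (where the analogous statement is proved for RRT). This sidesteps exactly the order-dependence issue you flag: whether permuting or discarding a finite prefix of samples leaves the limiting cost unchanged is indeed the nontrivial point, and the paper outsources it rather than arguing it from scratch. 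Your Hewitt--Savage route would require the same kind of verification (exchangeability instead of tail-measurability), so neither approach is intrinsically easier; the paper's is simply shorter because it appeals to an existing result. Your proposed geometric fallback is therefore unnecessary for matching the paper's level of rigor.
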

\begin{proof}

Since $Y_n>c^{\star}$ for all $n>N$ with positive probability, then $\underset{n\to\infty}{\limsup}\;{Y_n}>c^{\star}$ with positive probability, which then means 
\begin{equation}
    \mathbb{P}(\{\underset{n\to\infty}{\limsup}\;{Y_n}= c^{\star}\})<1
    \label{costeq}
\end{equation}

Probabilistic completeness property of the GSE algorithm guarantees that the algorithm returns a feasible path of finite cost. Therefore, invoking Lemma 25 of \cite{sertac_karaman2011sampling_rrt_star}, we have the following. 
\begin{lemma}{\cite{sertac_karaman2011sampling_rrt_star}} \label{lem:tailevent}
The event $\{{\limsup}\;{Y_n}= c^{\star}\}$ is  a tail event, and hence by Kolmogorov's zero-one law, it has probability either 0 or 1.
\end{lemma}

Lemma \ref{lem:tailevent} along with Eq. \eqref{costeq} verify that Eq. \eqref{equation: asymptotic fail} holds.
This proves the theorem.
\end{proof}

From the above, we conclude that while the GSE is probabilistically complete, it does not possess the  desired asymptotic optimality property.
\section{$\text{GSE}^{\star}$ Algorithm}\label{sec:GSE*}
\subsection{\label{GSE*alg}Revised Algorithm}

In this section, a modified version of the GSE algorithm is presented such that the resulting algorithm possesses asymptotic optimality property, while it also retains the advantages of the GSE algorithm. To this end, the $\text{GSE}^{\star}$ algorithm is formulated, which along with its detailed analysis (in Sections \ref{GSE*probcomp} and \ref{GSE*asopt}) forms one of the key contributions of this paper. The $\text{GSE}^{\star}$ in essence contains all the major steps used in the iterations of the GSE algorithm. In addition, a few more routines are added given in Line 10,13 and in Lines 17-23 in Algorithm \ref{algo:GSE*_algo}.

Before describing the revised algorithm, we introduce a few functions/notations used in the algorithm. First, the steer function ($\texttt{Steer}(\boldsymbol{X},\boldsymbol{Y},\eta)$) in Line 10 of Algorithm \ref{algo:GSE*_algo} returns a point $\boldsymbol{z}\in\mathbb{X}_{\text{free}} \cap \mathcal{B}_{\eta}(\boldsymbol{X})$, which is closest to the point $\boldsymbol{Y}$, that is, it returns a point lying in $\mathcal{B}_{\eta}(\boldsymbol{X})$ and on the line joining $\boldsymbol{X}$ and $\boldsymbol{Y}$. More precisely, $\texttt{Steer}(\boldsymbol{X},\boldsymbol{Y},\eta)$ is given as,
\begin{align}\label{steerfunc}
 \texttt{Steer}(\boldsymbol{X},\boldsymbol{Y},\eta)=\underset{z\in\mathcal{B}_{\eta}(\boldsymbol{X}) \cap \mathbb{X}_{\text{free}}}{\text{argmin}} \|\boldsymbol{z}-\boldsymbol{Y}\|  
\end{align}
Here, the parameter $\eta$ is arbitrary, and can be suited to the problem at hand.

Let $|\mathbb{V}|$ denote the number of vertices of the graph generated by the GSE algorithm. Following Lines 9 and 10 of Algorithm \ref{algo:GSE*_algo} at each iteration of the algorithm two vertices are added. Line 9 adds a vertex following the GSE algorithm, while Line 10 introduces another vertex following the steering procedure in Eq. \eqref{steerfunc}. 

We then define $r_n \triangleq \text{min}(\gamma_{\text{GSE}^\star} (\frac{\text{log}(|\mathbb{V}|)}{|\mathbb{V}|}))^{\frac{1}{d}},\eta)$. This choice of connection radius $r_n$ is motivated to ensure asymptotic optimality. In the proof of optimality, relevant bounds will be established for the constant $\gamma_{\text{GSE}^\star}$. The function  ($\texttt{Near}\left(\mathbb{V}, \boldsymbol{X}, r_n \right)$) returns a subset of the set of vertices of the graph generated by the GSE$^\star$ algorithm. These consist of those vertices which lie within the ball $\mathcal{B}_{r_n}{(\boldsymbol{X})}$. That is,
\begin{align}
   \texttt{Near}\left(\mathbb{V}, \boldsymbol{X}, r_n\right)=\mathbb{V}\cap \mathcal{B}_{r_n}(\boldsymbol{X})
\end{align}

Thus, the additional routines in the GSE$^\star$ algorithm over the earlier GSE algorithm are stated below. In Line 10 of Algorithm \ref{algo:GSE*_algo}, a new vertex is added using the steering function Eq. \eqref{steerfunc}. Subsequently, we use the $\texttt{Near}$ function, with connection radius as defined above, to establish a set of candidate vertices for connection in Line 18. Lines 19-22 use the $\texttt{Shape}$ function to check for viability of collision-free connection with each vertex and add edges to the edge set. Note that using the $\texttt{Shape}$ function is a stronger criterion than collision-checking, and is a key feature of the $\text{GSE}^\star$ algorithm similar to the GSE algorithm.

\subsection{Probabilistic Completeness of the $\text{GSE}^{\star}$ Algorithm}\label{GSE*probcomp}
We denote the vertex set and edge set of $\text{GSE}^{\star}$ after $n$ iterations by $\mathbb{V}_{n}^{\text{GSE}^\star}$ and $\mathbb{E}_{n}^{\text{GSE}^\star}$, respectively. The graph of the GSE$^\star$ algorithm is denoted $\mathbb{G}^{\text{GSE}^\star}_{n}$. As discussed in Section \ref{GSE*alg}, the $\text{GSE}^{\star}$ algorithm adds vertices and edges to the graph generated by the GSE algorithm. So, at any iteration, $\mathbb{V}_n ^{\text{GSE}}\subset \mathbb{V}_{n}^{\text{GSE}^\star}$ and $\mathbb{E}_n ^{\text{GSE}}\subset \mathbb{E}_{n}^{\text{GSE}^\star}$. We note that the GSE algorithm is probabilistically complete, as described in Section \ref{sec:probcomp} (details available in \cite{probabilistic_completeness_gse}). Since the graph generated by the GSE algorithm is a subgraph of the graph generated by the $\text{GSE}^{\star}$ algorithm, the $\text{GSE}^{\star}$ algorithm is probabilistically complete.

\begin{algorithm}[]
\caption{$\text{GSE}^{\star}$ Algorithm}
\begin{algorithmic}[1]
\State ${\mathbb{V}\gets \{\boldsymbol{X_{\text{init}}}, \boldsymbol{X_{\text{goal}}}\}},\;\;\mathbb{E}\gets \emptyset$
\State ${\mathcal{S}_{\text{init}}(\boldsymbol{P})\gets \texttt{Shape}(\boldsymbol{P},\boldsymbol{X_{\text{init}}},\mathbb{X}_{\text{obs}})}$
\State ${\mathcal{S}_{\text{goal}}(\boldsymbol{P})\gets \texttt{Shape}(\boldsymbol{P},\boldsymbol{X_{\text{goal}}},\mathbb{X}_{\text{obs}})}$
\For{$j=1\dots n$}
\State $\boldsymbol{X_{\text{rand}}}\gets \texttt{GenerateSample}$
\State $\mathcal{S}_{\text{rand}}(\boldsymbol{P})\gets \texttt{Shape}(\boldsymbol{P},\boldsymbol{X_{\text{rand}}},\mathbb{X}_{\text{obs}})$
\State $\boldsymbol{X_{\text{nearest}}}\gets \texttt{Nearest}(\mathbb{G}=(\mathbb{V},\mathbb{E}),\boldsymbol{X_{\text{rand}}})$
\State $\mathcal{S}_{\text{nearest}}(\boldsymbol{P})\gets \texttt{Shape}(\boldsymbol{P},\boldsymbol{X_{\text{nearest}}},\mathbb{X}_{\text{obs}})$
\State $\boldsymbol{X_{\text{new,g}}}\gets \texttt{Steer-GSE}(\boldsymbol{X_{\text{nearest}}},\boldsymbol{X_{\text{rand}}})$
\State $\boldsymbol{X_{\text{new}}}\gets \texttt{Steer}(\boldsymbol{X_{\text{nearest}}},\boldsymbol{X_{\text{rand}}},\eta)$
\State $\mathcal{S}_{\text{new,g}}(\boldsymbol{P})\gets \texttt{Shape}(\boldsymbol{P},\boldsymbol{X_{\text{new,g}}},\mathbb{X}_{\text{obs}})$
\State $\mathbb{X_{\text{intersect}}} \gets \texttt{IntersectShape}(G,\boldsymbol{X_{\text{new,g}}})$
\State $\mathbb{V} \gets \mathbb{V} \cup \{\boldsymbol{X_{\text{new,g}}},\boldsymbol{X_{\text{new}}}\}$
\For{$\boldsymbol{X_{\text{n}}}\in \mathbb{X}_{\text{intersect}}$}
\State $\mathbb{E}\gets \mathbb{E} \cup \{(\boldsymbol{X_{\text{n}}},\boldsymbol{X_{\text{new,g}}})\}$
\EndFor
\State $b\gets\gamma_{\text{GSE}^\star}(\log(|\mathbb{V}|) / |\mathbb{V}|)^{1/d}$
\State $\mathbb{U} \leftarrow \texttt{Near}\left(\mathbb{V}, \boldsymbol{X}_\text{new}, \text{min}(b,\eta)\right) $
\For{ $\boldsymbol{u} \in \mathbb{U}$}
\If{$\texttt{Shape}(\boldsymbol{u},\boldsymbol{X}_{\text{new}},\mathbb{X}_{\text{obs}})=0$}
\State $\mathbb{E}\gets \mathbb{E} \cup \{(\boldsymbol{u},\boldsymbol{X}_{\text{new}})\}$
\EndIf
\EndFor
\State $\mathbb{G}=(\mathbb{V},\mathbb{E})$
\State $Path_{\text{init,goal,shortest}} \gets\texttt{MinPath}(G,\boldsymbol{X_{\text{init}}},\boldsymbol{X}_{\text{goal}})$
\EndFor
\end{algorithmic}
\label{algo:GSE*_algo}
\end{algorithm}

\subsection{Asymptotic Optimality of the  $\text{GSE}^{\star}$ Algorithm}\label{GSE*asopt}
We now turn to proving that the GSE$^\star$ algorithm is asymptotically optimal. Since the GSE$^\star$ algorithm is quite similar in spirit to the RRG algorithm, we can reasonably expect this to be the case. In the following proof, we carefully follow the argument provided in \cite{sertac_karaman2011sampling_rrt_star} for the RRG algorithm. However, the selection of the parameter $\gamma_{\text{GSE}^\star}$ has been altered from $\gamma_{\mathrm{GSE}}$ to better suit for the analysis of the GSE$^\star$ algorithm. 

Let the optimal path between the initial and goal points be $\sigma^{\star}$. This path is assumed to have weak $\delta$-clearance. We assume that $\mu(\sigma^\star)=0$, where $\mu$ is the Lebesgue measure on $\mathbb{X}_{\text{free}}$.
The following lemma, while substantially similar to Lemma 50 of \cite{sertac_karaman2011sampling_rrt_star}, is included for completeness. Further, it serves to introduce several key parameters.
\begin{lemma}\label{lem:seqsigma}
For a path $\sigma^\star$ with optimal cost and weak $\delta$-clearance ($\delta>0$), there exists a sequence of paths $(\sigma_n)_{n=1}^{\infty}$ such that the following holds
\begin{itemize}
    \item Each path $\sigma_n$ has strong $\delta_n$-clearance, with $0<\delta_n \leq \delta$ and $\underset{n \rightarrow \infty}{\lim}\; \delta_n=0$
    \item $\underset{n \rightarrow \infty}{\lim}\; \sigma_n=\sigma^\star$
\end{itemize}
\end{lemma}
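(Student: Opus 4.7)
The plan is to lift the homotopy that witnesses weak $\delta$-clearance of $\sigma^\star$ into an explicit sequence, then to shrink the clearances by hand so as to enforce both the upper bound $\delta_n\leq\delta$ and the limit $\delta_n\to 0$.

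First I would invoke the definition of weak $\delta$-clearance given in Section~\ref{sec:Problem}: there exists a homotopy $\psi\colon[0,1]\to\mathfrak{S}$ (continuous in the BV metric) with $\psi(0)=\sigma^\star$ and $\psi(1)=\sigma_1$ where $\sigma_1$ has strong $\delta$-clearance, and such that each intermediate path $\psi(\alpha)$ has strong $\delta_\alpha$-clearance for some $\delta_\alpha>0$. I would then pick any sequence $\alpha_n\in(0,1]$ with $\alpha_n\downarrow 0$, for instance $\alpha_n=1/n$, and define the candidate sequence $\sigma_n\triangleq\psi(\alpha_n)$.

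Convergence $\sigma_n\to\sigma^\star$ is then immediate from continuity of $\psi$ at $0$ in the BV metric induced by $\|\cdot\|_{\mathrm{BV}}$, since $\alpha_n\to 0=\psi^{-1}(\sigma^\star)$. Each $\sigma_n$ has strong $\delta_{\alpha_n}$-clearance for some $\delta_{\alpha_n}>0$ guaranteed by the definition. The remaining issue is that the numbers $\delta_{\alpha_n}$ may not satisfy the required upper bound nor tend to zero, so I would define
\begin{equation}
\delta_n\triangleq\min\!\left(\delta_{\alpha_n},\,\delta,\,\tfrac{1}{n}\right).
\end{equation}
This is strictly positive, bounded above by $\delta$, and converges to $0$. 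Using the elementary monotonicity of strong clearance (if a path has strong $r$-clearance it has strong $r'$-clearance for every $0<r'\leq r$, since $\mathcal{B}_{r'}(\sigma(s))\subseteq\mathcal{B}_{r}(\sigma(s))\subseteq\mathbb{X}_{\text{free}}$), the path $\sigma_n$ retains strong $\delta_n$-clearance. Both bullet points of the lemma then follow.

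The only delicate point is the continuity of $\psi$ into $\mathfrak{S}$ at the endpoint $\alpha=0$: this is part of the standing assumption that $\psi$ is a homotopy in the BV topology on $\mathfrak{S}$ and so requires no separate argument. Everything else is a book-keeping step, which is why the lemma mirrors Lemma~50 of \cite{sertac_karaman2011sampling_rrt_star} and is stated here chiefly to fix the notation $\delta_n$, $\sigma_n$ used in the subsequent asymptotic-optimality argument for $\text{GSE}^\star$.
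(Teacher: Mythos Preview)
Your argument is correct for the lemma as stated, but it proceeds in the opposite direction from the paper's proof, and this difference matters for how the lemma is consumed downstream.

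You first fix the homotopy parameter $\alpha_n=1/n$, read off whatever clearance $\delta_{\alpha_n}$ the path $\psi(\alpha_n)$ happens to have, and then artificially shrink it via $\delta_n=\min(\delta_{\alpha_n},\delta,1/n)$ to force $\delta_n\to 0$. The paper does the reverse: it \emph{prescribes} $\delta_n=\min\bigl(\delta,\frac{(1+\phi)r_n}{2+\phi}\bigr)$ first, tying it explicitly to the connection radius $r_n$, and only then selects $\alpha_n=\max\{t:\psi(t)\subseteq\mathcal{Y}_n\}$ so that $\sigma_n=\psi(\alpha_n)$ has strong $\delta_n$-clearance by construction; the bulk of the paper's argument is showing $\alpha_n\to 1$.

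Your route is shorter and perfectly sound as a proof of existence. What it loses is quantitative control: the covering-ball construction in Definition~\ref{def:ballset} sets $q_n=\delta_n/(2+\phi)$, and the summability estimate in Lemma~\ref{approxlem} relies on the specific proportionality between $\delta_n$ and $r_n$ to pass from $\mu(B_{n,m})$ to a power of $r_n$. With your $\delta_n$ there is no such relation, so the subsequent bounds would not go through without reworking. Your closing remark that the lemma ``fixes the notation $\delta_n,\sigma_n$'' is therefore slightly misleading: it fixes \emph{a} sequence, but not the one the paper actually uses.
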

\begin{proof}

Let us define $\delta_{n}\triangleq\text{min}(\delta,\frac{(1+\phi)r_n}{2+\phi})$, for some bounded positive constant $\phi$. Clearly, the sequence $\delta_n$ (with $0\leq \delta_n \leq \delta$) is non-increasing and $\underset{n\to\infty}{\text{lim}}\delta_n =0$ (from the definitions of $\delta_n$ and $r_n$).
Consider the $n^{\text{th}}$ element of the sequence $(\mathcal{Y}_n)_{n=1}^{\infty}$ defined as 
$\mathcal{Y}_n=\{x \in \mathbb{X}_{\text{free}} \mid \mathcal{B}_{\delta_n}(x) \subseteq \mathbb{Y}_{\text{free}}\}$.

Now, we consider a homotopy $\psi$ where $\psi(0)=\sigma_{0}$ is a path with strong $\delta$-clearance. Further,  $\psi(1)=\sigma^{\star}$.
We can then define $\sigma_n=\psi(\alpha_n)$, where $\alpha_n=\underset{[0,1]}{\text{max}}\{t \mid \psi(t) \subseteq \mathcal{Y}_{n}\}$, that is, $\sigma_n$ is the path with strong $\delta_n$-clearance that is 'closest' to $\sigma^\star$. 
Since $\delta_n$ is a non-increasing sequence, by construction, $\mathcal{Y}_1\subseteq\mathcal{Y}_2\subseteq \dots \subseteq \mathbb{X}_{\text{free}}$. 
And, $\bigcup_{n=1}^{\infty}\mathcal{Y}_{n}=\mathbb{X}_{\text{free}}$. This establishes that the limit $\underset{n \rightarrow \infty}{\text{lim}} \alpha_{n}$ exists. Let this limit be $a$. Then, $\underset{n \rightarrow \infty}{\text{lim}} \psi(\alpha_{n})=\psi(a)$. Further, the clearance of $\psi(a)$ is given by $\underset{n \rightarrow \infty}{\text{lim}} \delta_{n}$. By the construction of the homotopy $\psi$, the clearance of $\psi(\alpha)$ is zero, only if $\alpha=1$. Since  $\underset{n \rightarrow \infty}{\text{lim}} \delta_{n}=0$, $a=1$. Thus, $\underset{n \rightarrow \infty}{\text{lim}} \alpha_{n}=1$. Consequently, $\underset{n \rightarrow \infty}{\text{lim}} \sigma_{n}=\sigma^{\star}$
\end{proof}

We have proved  that there exists a sequence of paths $(\sigma_n)_{n=1}^{n=\infty}$ such that each path $\sigma_n$ has strong $\delta_n$-clearance, and $\underset{n\to\infty}{\text{lim}}\delta_n =0$, and $\underset{n\to\infty}{\text{lim}}\sigma_n =\sigma^{\star}$. Next, we show that the probability that there exists a path  approximating $\sigma_n$ in $\mathbb{G}^{\text{GSE}^\star}_{n}$ tends to 1 as $n \rightarrow \infty$. This would ensure that the algorithm is asymptotically optimal.  
The following definition provides us with a framework toward generating the paths in GSE$^\star$ that approximate $\sigma_n$.
\begin{definition}\label{def:ballset}
We define the sets 
\begin{equation}\label{eq:ballset}
   \mathfrak{B}(\sigma_n, q_n,l_n)\triangleq\{B_{n,1},B_{n,2},B_{n,3},\dots B_{n,M_n}\} 
\end{equation}
Where each element $B_{n,m}$ is a ball of radius $q_n=\frac{\delta_n}{2+\phi}$, $l_n=\phi q_n$ and
\begin{itemize}
    \item The center of $B_{n,1}$ is $\sigma_n (0)=\sigma(0)$.
    \item The center of $B_{n,k}$ is $\sigma_n (b_k)$ , where $b_k \triangleq \min\{b\in [b_{k-1},1] \mid |\sigma(b)-\sigma(b_{k-1})|\geq \phi l_n\}$ for $k \in \{2,3,\dots M_n\}$
    \item The centre of the last ball must be the point $\sigma_n (1)$, that is, $b_{M_n}=1$.
\end{itemize}
\end{definition}

\begin{figure}
     \centering
     \includegraphics[scale=0.2]{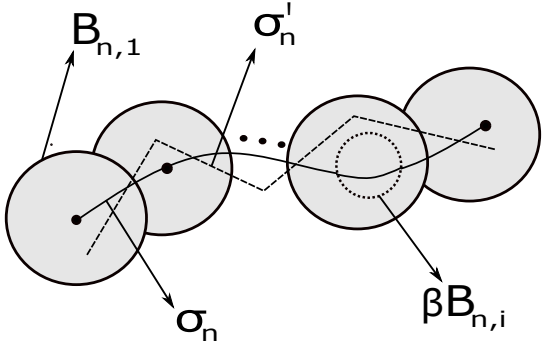}
     \caption{Ilustration of the paths $\sigma_n$ and $\sigma_n '$ along with the set of covering balls $ \mathfrak{B}(\sigma_n, q_n,l_n)$}
     \label{gse*balls}
 \end{figure}

A representation of the set $\mathfrak{B}(\sigma_n, q_n,l_n)$ is shown in Fig. \ref{gse*balls}.

We wish to show that each ball $B_{n,m}$ contains a vertex of GSE$^\star$ with higher probability as $n$ increases. This then allows us to consider paths in the graph of GSE$^\star$ that can approximate paths in the sequence $(\sigma_n)_{n=1}^{\infty}$. To prove this result, we first show that any point in the free-space is 'close' to a vertex of $\text{GSE}^{\star}$ after $n$ iterations with a probability that increases with $n$.

\begin{lemma}\label{lem:CMain}
Let $C_n$ at any iteration $n$ denote the event that for any $x \in \mathbb{X}_{\text{free}}$, there exists a vertex $v \in \mathbb{V}_n^{\text{GSE}^\star}$, such that $\|x-v\|\leq \eta$. Then $\mathbb{P}(C_n ^{c})\leq a e^{-b n}$ for some positive $\eta,a,b$.
\end{lemma}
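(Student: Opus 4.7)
My plan is to reduce the event $C_n^c$ to a collection of sampling-frequency events over a finite cover of $\mathbb{X}_{\text{free}}$, and then invoke Chernoff bounds on the cell counts to obtain exponential decay.

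First I would construct the cover. Since $\mathbb{X}_{\text{free}}\subset[0,1]^d$ is bounded and open, I extract a finite maximal $(\eta/4)$-separated set $\{c_1,\dots,c_K\}\subset\mathbb{X}_{\text{free}}$; by maximality the balls $B_{\eta/4}(c_i)$ cover $\mathbb{X}_{\text{free}}$, and because each $c_i$ lies in the open set $\mathbb{X}_{\text{free}}$ we have $\mu_i\triangleq \mu(B_{\eta/4}(c_i)\cap\mathbb{X}_{\text{free}})/\mu(\mathbb{X}_{\text{free}})>0$. Let $\mu_0\triangleq\min_i\mu_i>0$ be a uniform lower bound on the per-sample probability of hitting any given cell.

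Next I would analyze the dynamics of $D_i(k)\triangleq \min_{v\in\mathbb{V}_k^{\text{GSE}^{\star}}}\|v-c_i\|$. Because $\mathbb{V}_k^{\text{GSE}^{\star}}$ is non-decreasing in $k$, $D_i$ is non-increasing. Suppose in some iteration $\boldsymbol{X}_{\text{rand}}\in B_{\eta/4}(c_i)$, and let $d=\|\boldsymbol{X}_{\text{nearest}}-\boldsymbol{X}_{\text{rand}}\|$; by the triangle inequality $d\leq D_i+\eta/4$. If $d\leq\eta$, then $\texttt{Steer}$ returns $\boldsymbol{X}_{\text{new}}=\boldsymbol{X}_{\text{rand}}\in B_{\eta/4}(c_i)$, so $D_i$ drops to at most $\eta/4$; if $d>\eta$, then $\boldsymbol{X}_{\text{new}}$ sits on the segment from $\boldsymbol{X}_{\text{nearest}}$ to $\boldsymbol{X}_{\text{rand}}$ at distance $\eta$ from $\boldsymbol{X}_{\text{nearest}}$, giving $\|\boldsymbol{X}_{\text{new}}-c_i\|\leq (d-\eta)+\eta/4\leq D_i-\eta/2$. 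Thus every sample in $B_{\eta/4}(c_i)$ either already brings $D_i\leq\eta/4$ or shrinks $D_i$ by at least $\eta/2$. Setting $M\triangleq\lceil 2\,\mathrm{diam}(\mathbb{X})/\eta\rceil$, after at most $M$ samples in $B_{\eta/4}(c_i)$ we must have $D_i\leq 3\eta/4$, whence every $x\in B_{\eta/4}(c_i)$ lies within $\eta/4+3\eta/4=\eta$ of some vertex. Therefore, if every cell receives at least $M$ samples by iteration $n$, the event $C_n$ holds.

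Finally, let $N_i(n)\sim\mathrm{Bin}(n,\mu_i)$ be the number of samples landing in $B_{\eta/4}(c_i)\cap\mathbb{X}_{\text{free}}$ during the first $n$ iterations. By the multiplicative Chernoff bound, $\mathbb{P}(N_i(n)<M)\leq\exp(-n\mu_0/8)$ once $n\geq 2M/\mu_0$. A union bound over the $K$ cells yields $\mathbb{P}(C_n^c)\leq K\exp(-n\mu_0/8)$ for $n$ large enough; absorbing the finitely many initial iterations into the prefactor gives the claimed $\mathbb{P}(C_n^c)\leq a\,e^{-bn}$ with positive $\eta,a,b$. The main obstacle is making the $d>\eta$ case fully rigorous, since the identity $\|\boldsymbol{X}_{\text{new}}-\boldsymbol{X}_{\text{rand}}\|=d-\eta$ presupposes that the segment $\boldsymbol{X}_{\text{nearest}}\boldsymbol{X}_{\text{rand}}$ is collision-free; when it is not, one still has $\|\boldsymbol{X}_{\text{new}}-\boldsymbol{X}_{\text{rand}}\|\leq\|\boldsymbol{X}_{\text{nearest}}-\boldsymbol{X}_{\text{rand}}\|$ from the definition of $\texttt{Steer}$, and choosing each $c_i$ in the strict interior so that $B_{\eta/4}(c_i)\subset\mathbb{X}_{\text{free}}$ preserves the per-sample progress, so the exponential rate survives.
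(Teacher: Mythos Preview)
Your approach is cleaner and more direct than the paper's, but the obstacle case is a genuine gap, not a technicality you can patch at the end. The paper also begins by partitioning $\mathbb{X}_{\text{free}}$ into finitely many small-diameter cells, but then, rather than counting samples per cell, it tracks progress through iterates of an $\eta$-neighborhood map $g(S)=\{y\in\mathbb{X}_{\text{free}}:\mathcal{B}_\eta(y)\cap S\neq\emptyset\}$: it argues that if the graph already has a vertex in $g^{(i)}(x)$, then with probability bounded below (via a Voronoi-cell argument on the $\texttt{Nearest}$ selection) the next iteration extends the graph into $g^{(i-1)}(x)$. Since finitely many iterates of $g$ exhaust $\mathbb{X}_{\text{free}}$, a Chernoff bound on the number of successful ``level drops'' gives the exponential decay. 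The chain of iterates is precisely what lets the argument route around obstacles; your argument has no analogue of this.

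Concretely, your per-sample progress claim---that a sample in $B_{\eta/4}(c_i)$ either forces $D_i\leq\eta/4$ or decreases $D_i$ by at least $\eta/2$---fails exactly when it matters. Take two rooms joined by a corridor of length much larger than $\eta$, with all current vertices in room~A and $c_i$ deep in room~B. When $\boldsymbol{X}_{\text{rand}}\in B_{\eta/4}(c_i)$, the point $\boldsymbol{X}_{\text{nearest}}$ lies in room~A, and $\texttt{Steer}$ returns a point of $\mathcal{B}_\eta(\boldsymbol{X}_{\text{nearest}})\cap\mathbb{X}_{\text{free}}$, which is still in room~A or at the corridor mouth; $D_i$ need not drop by $\eta/2$ and can remain bounded below by the corridor length no matter how many samples land in cell~$i$. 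Your proposed fix of choosing $B_{\eta/4}(c_i)\subset\mathbb{X}_{\text{free}}$ does nothing here, since the obstruction is on the segment from $\boldsymbol{X}_{\text{nearest}}$ to the cell, not inside the cell. The fallback bound $\|\boldsymbol{X}_{\text{new}}-\boldsymbol{X}_{\text{rand}}\|\leq\|\boldsymbol{X}_{\text{nearest}}-\boldsymbol{X}_{\text{rand}}\|$ only says $D_i$ is non-increasing, which you already knew. To repair the argument you would need to show that vertices eventually propagate through the corridor, and that requires an iterate-chain mechanism of the kind the paper uses.
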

\begin{proof}
Define the diameter of a set $\mathbb{S}$ by $d(\mathbb{S})$ for a set $\mathbb{S} \subset \mathbb{R}^2$ as follows:
\begin{align}
    d(\mathbb{S})=\underset{x,y \in \mathbb{S}}{\sup}\|x-y\|
\end{align}
We partition the set $\mathbb{X}_{\text{free}}$ into a collection of  sets $\{\mathcal{X}_i\}$ for $i \in \{1,2,\dots, M\}$, with $M$ finite, such that $d(\mathcal{X}_i)<\eta$ for all $i \in \{1,2,\dots, M\}$. An illustration is provided in Fig. \ref{figCmain}.
Define the indicator variable
\begin{equation}\label{eq:partitiongood}
    C_{n,i} \triangleq  
 \begin{cases}
              0&\text{if $\mathbb{V}_n ^{\text{GSE}^\star} \cap {\mathcal{X}_i}=\emptyset$}\\
              1&\text{otherwise}
 \end{cases}
\end{equation}
We can conclude that $C_n ^c \subseteq \bigcup_{i=1}^{M} C_{n,i}^c$. Consequently
 \begin{equation}\label{eq:c conclusion}
     \mathbb{P}(C_n ^c) \leq \sum_{i=1}^{M}\mathbb{P}(C_{n,i} ^c)
 \end{equation}
We now wish to show that $\mathbb{P}(C_{n,i} ^c)$ decays exponentially with increasing $n$.
 
\begin{figure}
     \centering
     \includegraphics[scale=0.2]{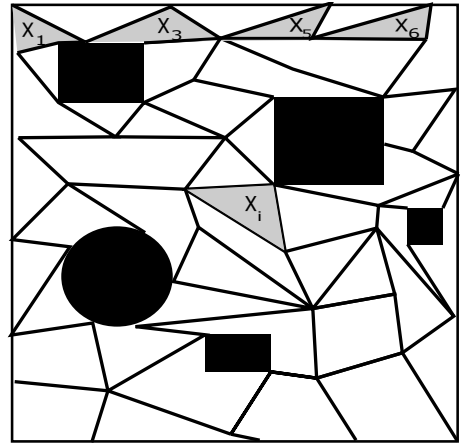}
     \caption{An illustration of  $\mathbb{X}_\text{free}$ and the corresponding exhaustive partitions $\mathcal{X}_1,\mathcal{X}_2,\dots,\mathcal{X}_M$. The black regions represent the obstacles.}
     \label{figCmain}
\end{figure}
 
To this end, define the function $g :2^{\mathbb{X}_{\text{free}}}\rightarrow 2^{\mathbb{X}_{\text{free}}}$ as
\begin{equation}
g(S)=\{y \in \mathbb{X}_{\text{free}} \mid B_{\eta}(x) \cap S \neq \emptyset\}
\end{equation}
The iterates of $g$ are denoted $g^{(2)}(S)=g(g(S))$, $g^{(3)}(S)=g^{(2)}(g(S))$ etc. For convenience, we denote $g(\{x\})$ (where $x \in \mathbb{X}_{\text{free}}$) by $g(x)$. Notice that $\mu(g^{(t)}(x))$ is an increasing function of $t$. We conclude that there exists an iterate of $g$ such that $\mathbb{X}_{\text{free}}\subseteq g^{(t)}(x)$, for any given point $x \in \mathbb{X}_{\text{free}}$.
Consider a point $x \in \mathbb{X}_{\text{free}}$. We wish to show that \begin{equation}
 \quad\mathbb{P}(g(x)\cap \mathbb{V}_n ^{\text{GSE}^\star}=\emptyset) \leq a_x e^{-b_x n}
\end{equation}
Let $i_k \triangleq \underset{i \in \mathbb N}{\text{min}}(\mathbb{V}_{k}\cap g^{(i)}(x) \neq \emptyset)$.
 Further 
 \begin{align}
 D_k \triangleq  
 \begin{cases}
              0&\text{if $i_{k+1}\geq i_k$}\\
              1&\text{otherwise}
 \end{cases}
 \end{align}
And let $D=\sum_{i=1}^{n} D_i$. 
Since passing to successive iterates of the function $g$ at least $t$ times, ensures that there will be a vertex of $\text{GSE}^{\star}$ in the set $g(x)$,

$\quad\mathbb{P}(\text{there is no vertex in $B_{\eta}(x)$})\leq \quad \mathbb{P}(D<t)$.

We claim that for $k^\text{th}$ iteration of GSE, $\mathbb{P}(D_k = 1)$ is bounded below by a positive fraction $p<1$ for all $k$ where $i_k > 1$.

At the $k^\text{th}$ iteration of GSE$^\star$, let there be $w$ $(\leq k)$ vertices of the GSE$^\star$ generated graph belonging to ${g}^{(i_k)}(x)$. We need to establish that for any location of these $w$ vertices in ${g}^{(i_k)}(x)$, there is a minimum probability of extending the graph to a point belonging to ${f}^{(i_k-1)}(x)$.

It is evident that the probability of $D_k=1$ increases as $w$ increases (assuming other $k-w$ vertices are kept fixed), because the number of candidate vertices for extension of the GSE$^\star$ generated graph increases. Hence, it suffices to consider the case, wherein $w=1$. Let this vertex be denoted by $\boldsymbol{Z}$. Let $\text{Vor}(\boldsymbol{Z})$ denote the Voronoi cell of the vertex $\boldsymbol{Z}$ in the Voronoi diagram consisting of the $k$ vertices of the GSE-generated graph so far. Because of the finite number $(k)$ of vertices in the GSE-generated graph, for the only vertex $\boldsymbol{Z}\in {g}^{(i_k)}(x)$, 
it follows that $\mu(\text{Vor}(\boldsymbol{Z}))$ is bounded below by some number, say $e_{i_k}$.
Now, for any arbitrary location of $\boldsymbol{Z}$ in the set ${g}^{(i_k)}(x)$, there is a subset of $\text{Vor}(\boldsymbol{Z})$, in which, when the new vertex is sampled, leads to extension of the GSE-generated graph to a point in ${g}^{(i_k -1)}(x)$. Let the minimum of this fraction over the set ${g}^{(i_k)}(x)$ be denoted as $g_{i_k}$. We can then conclude that for $p=\underset{i_k \leq m}{\text{min}}{\frac{{e_{i_k} g_{i_k}}}{{\mu(\mathbb{X}_{\text{free}})}}}$, the statement holds true.

Let us next consider a random variable $E=\sum_{i=1}^{t}E_i$, where the $E_i$ are i.i.d Bernoulli random variables ($\mathbb{P}(D_i =1)=p$). We know that $\mathbb{P}(D_i=1)>\mathbb{P}(D_i =1)=p$.
Consequently, the probability distribution of the random variable $C$ provides a worst-case (lower) bound on $\mathbb{P}(D>t)$, that is, $\mathbb{P}(D>t)>\mathbb{P}(E>t)$. Taking complement on both sides we have $\mathbb{P}(D<t)<\mathbb{P}(E<t)$. We can then use Chernoff bounds on the random variable $E$, so for some $k,l>0$:
\begin{align}
  \mathbb{P}(D<t) &\leq \mathbb{P}(C<t) \leq e^{-2t}e^{-ln +k/n}   
\end{align}
For sufficiently large $n$, $\mathbb{P}(D<t)\leq a_x e^{-b_x n}$. Since $\mathbb{P}(\mathbb{V}_{n} ^{\text{GSE}^\star}\cap g(x)=\emptyset)\leq  \mathbb{P}(D<t)$, 
we have $\mathbb{P}(\mathbb{V}_{n}\cap g(x)=\emptyset)\leq a_x e^{-b_x n}$.

Since we can select a suitable point in any given set $\mathcal{X}_i$ and apply the argument above, we can conclude that,therefore (from Eq. \eqref{eq:partitiongood}) $\mathbb{P}(C_{n,i}^c) \leq a_i e^{-b_i n}$ for some positive $a_i , b_i$ independent of $n$. Therefore, we can conclude that since the sum of finitely many exponentially decreasing functions is bounded above by an exponentially decreasing function, from Eq. \eqref{eq:c conclusion} one can conclude that $\mathbb{P}(C_n^{c})\leq a e^{-b n}$.
\end{proof}

Let $\rho \in (0,1)$ be a variable independent of $n$. The event $(\bigcap_{i=\Floor{\rho n}}^{n}C_i)$ indicates that between the $\Floor{\rho n}^{\text{th}}$ iteration and the $n^{\text{th}}$ iteration of the GSE$^\star$ algorithm, it is not necessary to steer any new sampled vertices using the $\texttt{Steer}$ function (Line 10 of Algorithm \ref{algo:GSE*_algo}), they are directly added to the graph. Since bounds on the probability of this event are very relevant to our proof, we have the following lemma.
\begin{lemma}\label{lem:Csub}
$\sum_{n=1}^{\infty}\mathbb{P}\left((\bigcap_{i=\Floor{\rho n}}^{n}C_i)^{c}\right)<\infty$
\end{lemma}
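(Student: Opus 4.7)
The plan is to reduce this to Lemma \ref{lem:CMain} via a union bound and then sum a convergent geometric series.

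First, I would rewrite the complement of the intersection as a union and apply the union bound:
\begin{equation*}
\mathbb{P}\!\left(\Bigl(\bigcap_{i=\Floor{\rho n}}^{n} C_i\Bigr)^{c}\right)
= \mathbb{P}\!\left(\bigcup_{i=\Floor{\rho n}}^{n} C_i^{c}\right)
\leq \sum_{i=\Floor{\rho n}}^{n}\mathbb{P}(C_i^{c}).
\end{equation*}
By Lemma \ref{lem:CMain}, there exist positive constants $a, b$ (independent of $n$) such that $\mathbb{P}(C_i^{c})\leq a e^{-b i}$ for all $i$. Substituting this bound and summing the resulting geometric tail, I obtain
\begin{equation*}
\sum_{i=\Floor{\rho n}}^{n} a e^{-b i}
\leq a\,\frac{e^{-b\Floor{\rho n}}}{1-e^{-b}}
\leq \frac{a\, e^{b}}{1-e^{-b}}\, e^{-b\rho n},
\end{equation*}
where the last step uses $\Floor{\rho n}\geq \rho n - 1$. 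Denote the leading constant by $K$, so that $\mathbb{P}((\bigcap_{i=\Floor{\rho n}}^{n}C_i)^{c}) \leq K e^{-b\rho n}$.

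Finally, summing over $n$ gives
\begin{equation*}
\sum_{n=1}^{\infty}\mathbb{P}\!\left(\Bigl(\bigcap_{i=\Floor{\rho n}}^{n} C_i\Bigr)^{c}\right)
\leq K \sum_{n=1}^{\infty} e^{-b\rho n}
= \frac{K e^{-b\rho}}{1-e^{-b\rho}}<\infty,
\end{equation*}
since $b\rho>0$ and the sum is a convergent geometric series. Hence the claimed summability holds.

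There is no serious obstacle here: the whole statement is essentially a Borel--Cantelli-type corollary of the exponential decay established in Lemma \ref{lem:CMain}. The only mild care needed is in the first step, making sure the union bound is applied over roughly $(1-\rho)n$ terms without destroying the exponential decay, which is easily controlled by pulling out the dominant exponential factor $e^{-b\Floor{\rho n}}$ and absorbing the geometric tail into a single constant before summing over $n$.
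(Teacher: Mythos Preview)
Your proof is correct and follows essentially the same approach as the paper: De Morgan's law, a union bound, and the exponential decay $\mathbb{P}(C_i^{c})\leq a e^{-bi}$ from Lemma~\ref{lem:CMain}, followed by summing a geometric series. Your write-up is in fact more careful with the estimates than the paper's terse one-line version.
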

\begin{proof}
Since 
\begin{equation}
    \sum_{n=1}^{\infty}\mathbb{P}\left((\bigcap_{i=\Floor{\rho n}}^{n}C_i)^{c}\right)= \sum_{n=1}^{\infty}\mathbb{P}(\bigcup_{i=\Floor{\rho n}}^{n}C_i ^{c})\leq \sum_{n=1}^{\infty}\sum_{i=\Floor{\rho n}}^{n}a e^{-b n}
\end{equation}
We conclude that $ \sum_{n=1}^{\infty}\mathbb{P}((\bigcap_{i=\Floor{\rho n}}^{n}C_i)^{c})<\infty$ for all $a,b \in \mathbb{R}$.
\end{proof}
Lemmas \ref{lem:CMain} and \ref{lem:Csub} above lead us to the proof of Lemma \ref{approxlem} below.
\begin{lemma} \label{approxlem}
Define the event $A_{n,m}$ as the event that the ball $B_{n,m}$ contains a vertex of $\text{GSE}^{\star}$ after $n$ iterations of the $\text{GSE}^{\star}$ algorithm. Define $A_n\triangleq\bigcap_{m=1}^{M_n} A_{n,m}$, where $M_n$,$B_{n,m}$ etc. are as defined in Def. \ref{def:ballset}. Then, $\mathbb{P}(\underset{n \rightarrow \infty}{\liminf}A_n )=1$.
\end{lemma}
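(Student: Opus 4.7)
The plan is to invoke the first Borel--Cantelli lemma: since $\liminf_{n\to\infty}A_n=(\limsup_{n\to\infty}A_n^c)^c$, it suffices to show $\sum_{n=1}^{\infty}\mathbb{P}(A_n^c)<\infty$. To control $\mathbb{P}(A_n^c)$ I would condition on the good-cover event $\mathcal{G}_n\triangleq\bigcap_{i=\Floor{\rho n}}^{n}C_i$ from Lemma~\ref{lem:Csub}: on $\mathcal{G}_n$, at each iteration $j\in\{\Floor{\rho n}+1,\dots,n\}$ the sampled point $\boldsymbol{X}_\text{rand}$ already lies within Euclidean distance $\eta$ of some existing vertex, so by Eq.~\eqref{steerfunc} the output of \texttt{Steer} equals $\boldsymbol{X}_\text{rand}$ itself, and the sample is inserted verbatim into $\mathbb{V}^{\text{GSE}^\star}$.

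Splitting $\mathbb{P}(A_n^c)\leq \mathbb{P}(A_n^c\cap\mathcal{G}_n)+\mathbb{P}(\mathcal{G}_n^c)$, the second term is already summable by Lemma~\ref{lem:Csub}. For the first, a union bound over the $M_n$ covering balls of Def.~\ref{def:ballset} gives $\mathbb{P}(A_n^c\cap\mathcal{G}_n)\leq \sum_{m=1}^{M_n}\mathbb{P}(A_{n,m}^c\cap\mathcal{G}_n)$. Observing the set inclusion $A_{n,m}^c\cap\mathcal{G}_n\subseteq\{\text{none of the }\lceil(1-\rho)n\rceil\text{ i.i.d.\ uniform samples of iterations }\Floor{\rho n}+1,\dots,n\text{ falls in }B_{n,m}\}$, I would use $1-x\leq e^{-x}$ to bound
\begin{align}
\mathbb{P}(A_{n,m}^c\cap\mathcal{G}_n)\leq \left(1-\frac{\mu(B_{n,m})}{\mu(\mathbb{X}_\text{free})}\right)^{(1-\rho)n}\leq \exp\!\left(-\frac{(1-\rho)\,n\,\mu(B_{n,m})}{\mu(\mathbb{X}_\text{free})}\right).
\end{align}
Substituting $q_n=\delta_n/(2+\phi)$ with $\delta_n=(1+\phi)r_n/(2+\phi)$, and using the deterministic bound $|\mathbb{V}|\leq 2n$ together with the monotonicity of $\log x/x$ to lower-bound $r_n^d$ by $\gamma_{\text{GSE}^\star}^d\log(2n)/(2n)$, the exponent above becomes at most $-K\gamma_{\text{GSE}^\star}^d\log n$ for a constant $K>0$ depending only on $d,\phi,\rho$ and $\mu(\mathbb{X}_\text{free})$; hence each per-ball term is bounded by $n^{-K\gamma_{\text{GSE}^\star}^d}$.

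It remains to count the balls. Since consecutive centres of $\mathfrak{B}(\sigma_n,q_n,l_n)$ are at least $\phi l_n\propto r_n$ apart and $\sigma_n$ has bounded total variation, $M_n=O((n/\log n)^{1/d})$. Choosing $\gamma_{\text{GSE}^\star}$ large enough that $K\gamma_{\text{GSE}^\star}^d>1+1/d$ renders $M_n\cdot n^{-K\gamma_{\text{GSE}^\star}^d}$ summable in $n$, and, together with the already-summable tail $\mathbb{P}(\mathcal{G}_n^c)$, closes the Borel--Cantelli argument. The main obstacle I anticipate is the bookkeeping: carefully tracking the geometric constants arising from the cascaded definitions of $q_n$, $\delta_n$ and $r_n$ so as to extract an explicit, usable lower bound on the admissible $\gamma_{\text{GSE}^\star}$; the potential correlation between $\mathcal{G}_n$ and the samples of the final window is sidestepped by the set-inclusion above, which reduces the conditional bound to one on the unconditional sampling measure.
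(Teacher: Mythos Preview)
Your proposal is correct and follows essentially the same route as the paper: Borel--Cantelli via summability of $\mathbb{P}(A_n^c)$, split against the good-cover event $\bigcap_{i=\lfloor\rho n\rfloor}^{n}C_i$ (Lemma~\ref{lem:Csub}), union bound over the $M_n$ balls, the inequality $(1-x)^a\le e^{-ax}$, and a count $M_n=O((n/\log n)^{1/d})$ to extract the threshold on $\gamma_{\text{GSE}^\star}$. Your use of the set inclusion $A_{n,m}^c\cap\mathcal{G}_n\subseteq\{\text{no raw sample in the window hits }B_{n,m}\}$ is in fact slightly cleaner than the paper's conditional-probability bookkeeping, and your bound $|\mathbb{V}|\le 2n$ is the correct monotonicity direction for lower-bounding $r_n^d$ (the paper states $|\mathbb{V}|\ge n$, which points the wrong way, though the conclusion survives because $|\mathbb{V}|\asymp n$ deterministically).
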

\begin{proof} 
We proceed by showing that $\sum_{n=1}^{\infty}\mathbb{P}({A_n ^{c}})$ is bounded. Then by the Borel-Cantelli lemma, we have $\mathbb{P}(\underset{n \rightarrow \infty}{\limsup}A_n ^{c})=0$. Thus, we can directly conclude that $\mathbb{P}(\underset{n \rightarrow \infty}{\liminf}A_n )=1$.

Let $s_n$ be the length of the path $\sigma_n$. We denote the volume of the unit ball in $\mathbb{R}^{d}$ by $\zeta_d$. Let $n_0=\underset{n \in \mathbb{N}}{\text{argmin}}(\delta_n < \delta )$. In what follows, we are concerned only with $n$ for which $n>n_0$. Since the distance between the centres of two successive balls (along the curve) in $\mathfrak{B}_{\sigma_n, q_n,l_n}$ is less than $l_n$, we can conclude that 
\begin{align}
M_n< \frac{(2+\phi)^2 s_n}{\phi(1+\phi) \delta_n}\;, \hspace{1cm}       \mu(B_{n,m})=\zeta_d q_n ^{d}
\end{align}
Let $\Floor{x}$ denote the largest integer less than a given real number $x$.
Observe that the event $\bigcap_{i=\Floor{\rho n}}^{n}C_i$, denotes the event that every vertex sampled between iterations $\Floor{\rho n}$ and $n$  is within $\eta$ distance of an existing vertex of GSE$^\star$. This ensures that every new vertex of the $\text{GSE}^{\star}$ graph does not need to be steered. Thus, we can treat the vertices in the set $\mathbb{V}^{\text{GSE}^\star}_n \setminus \mathbb{V}^{\text{GSE}^\star}_{\Floor{\rho n}}$ as being uniform random samples that are added directly to the graph generated by GSE$^\star$. Let $h \triangleq \frac{1+\phi}{(2+ \phi)^2}$.
The conditional probability of the event $A_{n,m}^{c}$ conditioned on the event $\bigcap_{i=\Floor{\rho n}}^{n}C_i$) can therefore be given by
\begin{align}
    \mathbb{P}\left(A_{n,m}^{c} \mid \bigcap_{i=\Floor{\rho n}}^{n}C_i\right) &\leq \left(1-\frac{\mu(B_{n,m})}{\mu(\mathbb{X}_{\text{free}})}\right)^{n-\Floor{\rho n}}\nonumber\\ 
    &\leq\left(1-\frac{\zeta_d r_n ^{d}}{h ^{d} \mu(\mathbb{X}_{\text{free}})}\right)^{(1-\rho)n}
    \label{eqn:A_nm_less_than}
\end{align}
We observe that $(1-x)^a \leq e^{-ax}$ for real $a$ and $x$. Thus the right hand side of Eq. \eqref{eqn:A_nm_less_than} is bounded above as follows:
 \begin{align}
 \left(1-({\zeta_d r_n ^{d}}/{h ^{d} \mu(\mathbb{X}_{\text{free}}))}\right)^{(1-\rho)n} &\leq e^{ \frac{(\rho -1) n\zeta_d r_n ^{d}}{h^{d} \mu(\mathbb{X}_{\text{free}})}}
 \label{eqn:one_minus_tau_d_less_than}
 \end{align}
Since $|\mathbb{V}^{GSE^\star}_n| \geq n$, we know that for sufficiently large $n$, $\frac{{\text{log}(\text{card}(\mathbb{V}))}}{\text{card}(\mathbb{V})} \leq \frac{\text{log}(n)}{{n}}$. Thus, we have the following inequality
\begin{align}
 e^{\left[{\frac{(\rho -1) n \zeta_d r_n ^{d}}{h^{d} \mu(\mathbb{X}_{\text{free}})}}\right]}
&\leq n^{\left[{\frac{-(1-\rho)(\gamma_{\text{GSE}^\star}^{d} \zeta_d)}{\mu(\mathbb{X}_{\text{free}})h^{d}}}\right]}
\end{align}
From Eqs. \eqref{eqn:A_nm_less_than} and \eqref{eqn:one_minus_tau_d_less_than} along with the definition of $A_n$,
\begin{align}\label{summability}
    \mathbb{P}\left(A_n ^{c} \mid \bigcap_{i=\Floor{\rho n}}^{n}C_i \right)&\leq M_n n^{\frac{-(1-\rho)(\gamma_{\text{GSE}^\star}^{d} \zeta_d)}{\mu(\mathbb{X}_{\text{free}})h ^d}}\nonumber\\
    &\leq \frac{(2+\phi)^2 s_nn^{\frac{1}{d}-\frac{(1-\rho)(\gamma_{\text{GSE}^\star}^{d} \zeta_d)}{\mu(\mathbb{X}_{\text{free}}) h ^d}} }{(1+\phi )\phi \gamma_{\text{GSE}^\star} {\text{log}(n)}^{\frac{1}{d}}}
\end{align}
In order to ensure that the left hand side of Eq. \eqref{summability} decreases with increasing $n$ and  to have the sequence on the  right hand side of Eq. \eqref{summability} summable, we require that the exponent of $n$ be less than -1, that is,  
\begin{align}
    \left[({(1-\rho)(\gamma_{\text{GSE}^\star}^{d} \zeta_d)}/{\mu(\mathbb{X}_{\text{free}})h ^d})-({1}/{d})\right] >1
\end{align}
If the above criterion is satisfied, the sequence $\sum_{n=1}^{\infty}\mathbb{P}(A_n ^{c} \mid \bigcap_{i=\Floor{\rho n}}^{n}C_i )$ is summable. Therefore, the condition on $\gamma_{\text{GSE}^\star}$ reduces to the following:
\begin{equation}
    \gamma_{\text{GSE}^\star} > h \left[\left(1+\frac{1}{d}\right)\left(\frac{\mu(\mathbb{X}_{\text{free}})}{1- \rho}\right)\right]^{\frac{1}{d}} 
\end{equation}
Lastly, from the definition of conditional probability, we have,
\begin{align}
    \mathbb{P}(A_n ^{c} \mid \bigcap_{i=\Floor{\rho n}}^{n}C_i ) &= \frac{\mathbb{P}\left(A_n ^{c} \cap (\bigcap_{i=\Floor{\rho n}}^{n}C_i) \right)}{\mathbb{P}(\bigcap_{i=\Floor{\rho n}}^{n}C_i)}\nonumber\\ &\geq \mathbb{P}\left(A_n ^{c} \cap \left(\bigcap_{i=\Floor{\rho n}}^{n}C_i\right) \right)\nonumber\\ & \geq \mathbb{P}(A_n ^{c}) - \mathbb{P}\left((\bigcap_{i=\Floor{\rho n}}^{n}C_i)^{c}\right) 
\end{align}
Thus, we have: 
   \begin{equation}
       \sum_{n=1}^{\infty}\mathbb{P}(A_n ^{c}) \leq \sum_{n=1}^{\infty} \mathbb{P}\left(A_n ^{c} \mid \bigcap_{i=\Floor{\rho n}}^{n}C_i \right) + \sum_{n=1}^{\infty} \mathbb{P}\left((\bigcap_{i=\Floor{\rho n}}^{n}C_i)^{c}\right)
   \end{equation}
      
Following Lemma \ref{lem:Csub} and Eq. \eqref{summability}, the right hand side  of the inequality above is summable. Therefore, $\sum_{n=1}^{\infty}\mathbb{P}(A_n ^{c})$ is bounded, and consequently by the Borel-Cantelli lemma, we have $\mathbb{P}(\underset{n \rightarrow \infty}{\limsup}A_n ^{c})=0$. Taking complement on both sides of $\mathbb{P}(\underset{n \rightarrow \infty}{\limsup}A_n ^{c})=0$ gives 
$\mathbb{P}(\underset{n \rightarrow \infty}{\liminf} A_n)=1$.
\end{proof}

\begin{obs}
\textnormal{ Observe that the distance between two points $x \in B_{n,i}$ and $y \in B_{n,i+1}$ can be bounded as }
 \begin{align}
  \|x-y\| \leq 2 q_n + l_n \leq \delta_n < r_n   
 \end{align}
\textnormal{Further, the distance between any point $x$ in a ball $B_{n,i}$ to the obstacle space is denoted $o_x$. Then $o_x \leq \delta_n - q_n \leq \delta_n$. Thus, for any point $x \in B_{n,i}$, every point in the adjacent balls lies in the shape of the point $x$.
We have shown in Lemma \ref{approxlem} that the event $A_n$ occurs infinitely often with probability one as $n$  increases to infinity. Since the distance between any two points in two consecutive balls is less than $r_n$, two vertices in consecutive balls are connected. Consequently, we can claim that  the $\text{GSE}^{\star}$ algorithm creates a path between the initial and goal points.}
\end{obs}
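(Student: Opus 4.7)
The Observation assembles three facts: a Euclidean distance bound between points in consecutive balls, a clearance bound for points in a ball, and the resulting claim that $\text{GSE}^{\star}$ eventually produces a feasible initial-to-goal path. My plan is to verify these in sequence and then chain them with Lemma \ref{approxlem}.

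First, I would establish the distance bound $\|x-y\|\le 2q_n+l_n$ by triangle inequality. By Definition \ref{def:ballset}, the centers of $B_{n,i}$ and $B_{n,i+1}$ lie on $\sigma_n$ and (via the minimality of $b_k$ and continuity of $\sigma_n$) are separated by at most $l_n$. Since each ball has radius $q_n$, the triangle inequality yields $\|x-y\|\le q_n+l_n+q_n$. Substituting $q_n=\delta_n/(2+\phi)$ and $l_n=\phi q_n$ gives $2q_n+l_n=(2+\phi)q_n=\delta_n$. The bound $\delta_n<r_n$ is then immediate from the definition $\delta_n\le(1+\phi)r_n/(2+\phi)$ in Lemma \ref{lem:seqsigma}, since $(1+\phi)/(2+\phi)<1$ for $\phi>0$.

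Next, for the clearance claim, I would exploit the strong $\delta_n$-clearance of $\sigma_n$: each ball center lies on $\sigma_n$ and is therefore at distance at least $\delta_n$ from $\mathbb{X}_{\text{obs}}$, so any $x\in B_{n,i}$ is at distance at least $\delta_n-q_n$ from obstacles. The main obstacle in this step is arguing that every $y$ in an adjacent ball lies in the GSE shape $\mathcal{S}_x$, since $\mathcal{S}_x$ is not a plain ball but a region defined by Algorithm \ref{alg:shape_function} through angular and radial conditions on every obstacle. I would argue that the entire segment $\overline{xy}$ sits in the obstacle-free $\delta_n$-tube around $\sigma_n$, so no obstacle intersects the angular sector from $x$ subtended by $y$; invoking the angle-based admissibility built into the $f_i$ and $\ell_i$ computations, the shape function then returns zero at $y$, i.e.\ $y\in\mathcal{S}_x$.

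Finally, for path existence, I would invoke Lemma \ref{approxlem}: with probability one, for all sufficiently large $n$ each $B_{n,m}$ contains a vertex of $\mathbb{V}^{\text{GSE}^\star}_n$. Pick two such vertices $x,y$ in consecutive balls $B_{n,i},B_{n,i+1}$. Since $\|x-y\|\le\delta_n<r_n$, we have $y\in\texttt{Near}(\mathbb{V},x,r_n)$, and since $y\in\mathcal{S}_x$ the shape check at Line 20 of Algorithm \ref{algo:GSE*_algo} succeeds, so the edge $(x,y)$ is added to $\mathbb{E}$. Chaining such edges from $B_{n,1}$ (centered at $\boldsymbol{X}_{\text{init}}$) through to $B_{n,M_n}$ (centered at $\boldsymbol{X}_{\text{goal}}$) produces a feasible path in $\mathbb{G}^{\text{GSE}^\star}_n$. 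I expect the shape-containment step to be the hard part, since it is the only place where the specific geometric structure of the GSE expansion (rather than a generic $r_n$-ball connection) enters the argument; everything else is either triangle-inequality bookkeeping or a direct appeal to Lemma \ref{approxlem}.
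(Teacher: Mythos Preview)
Your proposal is correct and follows essentially the same route as the paper's own inline reasoning in the Observation: triangle inequality for the inter-ball distance, the $\delta_n$-clearance of $\sigma_n$ for the obstacle distance, and then Lemma~\ref{approxlem} together with the $r_n$-neighborhood and shape checks in Algorithm~\ref{algo:GSE*_algo} to chain edges. You supply more detail than the paper does---in particular your explicit justification that $2q_n+l_n=(2+\phi)q_n=\delta_n$ and your recognition that the shape-containment step is where the GSE-specific geometry enters---and you implicitly correct the paper's inequality on $o_x$ to the intended lower bound $o_x\ge\delta_n-q_n$.
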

 
\begin{definition} \label{def:hppp} (Homogeneous Poisson Point Process (\cite{stoyan}))

Let $\text{Poisson}(a)$ be the Poisson random variable of intensity $a$. A homogeneous  Poisson point process of intensity $\nu$ on  $\mathbb{R}^{d}$ is a random set of countable points $P_{\nu}^{d} \subset \mathbb{R}^d$. The set $P_{\nu}^{d}$ is such that for disjoint and measurable sets  $S_1,S_2 \in \mathbb{R}^{d}$ , we have $|(P_{\nu}^{d}| \cap S_1 ) = \text{Poisson}(\mu(S_1)\nu)$ and $|(P_{\nu}^{d}| \cap S_2 ) = \text{Poisson}(\mu(S_2)\nu)$ with $\text{Poisson}(\mu(S_2)\nu)$ and $\text{Poisson}(\mu(S_1)\nu)$ independent.
\end{definition} 

 \begin{lemma}(\cite{stoyan}) \label{stoyan}
 Given a sequence  of points $(x_i)_{i=1}^{\infty}$ drawn independently and uniformly from a measurable set $L \subseteq \mathbb{R}^{d}$. The set $\{x_1,x_2,\dots,x_{\text{Poisson}(\tau)}\}$ 
 is the restriction to $L$ of a homogeneous Poisson point process of intensity $\frac{\tau}{\mu(L)}$
 \end{lemma}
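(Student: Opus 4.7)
The plan is to verify directly the two defining properties of a homogeneous Poisson point process as stated in Definition \ref{def:hppp}: for any finite collection of pairwise disjoint measurable subsets of $L$, the cardinalities of their intersections with $P := \{x_1, \dots, x_{\text{Poisson}(\tau)}\}$ are independent and Poisson-distributed with the prescribed mean. The classical technique is \emph{Poisson thinning}, executed via the joint probability generating function after conditioning on the (random) sample size.

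First I would fix disjoint measurable sets $S_1, S_2 \subseteq L$ and set $N := \text{Poisson}(\tau)$, independent of the $x_i$. Conditional on $N = n$, the collection $\{x_1,\dots,x_n\}$ consists of i.i.d.\ uniform draws on $L$, so each point independently falls in $S_i$ with probability $p_i := \mu(S_i)/\mu(L)$. Consequently $(|P \cap S_1|, |P \cap S_2|, N - |P \cap S_1| - |P \cap S_2|)$ conditional on $N = n$ is Multinomial with parameters $(n;\, p_1, p_2, 1 - p_1 - p_2)$. Marginalising against the Poisson law of $N$ yields
\begin{align}
\mathbb{E}\!\left[z_1^{|P \cap S_1|} z_2^{|P \cap S_2|}\right]
&= \sum_{n=0}^{\infty} e^{-\tau}\frac{\tau^n}{n!}\bigl(p_1 z_1 + p_2 z_2 + 1 - p_1 - p_2\bigr)^n \nonumber \\
&= \exp\bigl(-\tau p_1 (1 - z_1)\bigr)\,\exp\bigl(-\tau p_2 (1 - z_2)\bigr).
\end{align}
The factorisation into a product of two PGFs, each that of Poisson$(\mu(S_i)\tau/\mu(L))$, simultaneously establishes the Poisson marginals with intensity $\nu = \tau/\mu(L)$ and the independence of the counts in $S_1$ and $S_2$.

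The main obstacle is that Definition \ref{def:hppp} actually requires the independence property for \emph{every} finite collection of disjoint measurable sets, not merely pairs. The extension is routine: the same conditioning-on-$N$ argument with $k+1$ Multinomial categories (the $k$ sets $S_1,\dots,S_k$ together with $L\setminus\bigcup_i S_i$) leads to the joint PGF factorising into a product of $k$ Poisson PGFs with the correct parameters, which yields mutual independence and the claimed intensity. The one genuinely delicate ingredient is that the \emph{random} sample size $N$ must itself be Poisson and independent of the $(x_i)$, for otherwise the marginal count in $S_i$ would be Binomial rather than Poisson; this is exactly what the hypothesis $N = \text{Poisson}(\tau)$ provides.
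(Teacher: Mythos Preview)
Your argument is correct and is exactly the classical Poisson-thinning proof: conditioning on the Poisson sample size, recognising the multinomial structure of the counts, and exhibiting the factorised probability generating function to get both the Poisson marginals and mutual independence. Your computation of the PGF is right, and your remark that the extension from two to $k$ disjoint sets is immediate via the same multinomial-with-$(k{+}1)$-categories argument is also correct.

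There is nothing to compare against, however: the paper does not prove this lemma. It is stated with a citation to \cite{stoyan} and invoked as a known result from the stochastic-geometry literature, used downstream in the proof of Theorem~\ref{gse*mainthm} to poissonise the sampling process. So your proposal supplies a self-contained proof where the paper simply imports the fact.
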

 
 Let the set of paths in the graph generated by the $\text{GSE}^{\star}$ algorithm in $n$ iterations be $\mathfrak{P}_{n}$. Let us define $\sigma_{n}^{'}\triangleq \underset{\sigma^{'} \in \mathfrak{P}_{n}}{\argmin}\|\sigma_n-\sigma'\|_{\text{BV}}$. It now remains to be shown that the sequence of paths $\sigma_n ^{'}$ converges to the path $\sigma^{\star}$.

\begin{theorem}\label{gse*mainthm}
With the sequence $(\sigma'_n )_{n=1}^{\infty}$ defined as above, $\mathbb{P}({\underset{n \rightarrow \infty}{\lim}\|\sigma_n -\sigma'_n \|_{\text{BV}} =0})=1$
\end{theorem}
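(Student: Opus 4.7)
The plan is to combine Lemma \ref{approxlem} with the observation that immediately precedes it. On the event $\liminf_{n} A_n$, which has probability one, for every sufficiently large $n$ each ball $B_{n,m}$ of the cover $\mathfrak{B}(\sigma_n, q_n, l_n)$ contains a vertex $v_{n,m}$ of the GSE$^\star$ graph; in particular $v_{n,1} = \boldsymbol{X}_{\text{init}}$ and $v_{n,M_n} = \boldsymbol{X}_{\text{goal}}$, since these are added at initialization. The observation further ensures that any $x \in B_{n,m}$ and $y \in B_{n,m+1}$ satisfy $\|x-y\| \le 2q_n + l_n = \delta_n < r_n$ and that each point lies in the generalized shape of the other. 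Hence the $\texttt{Near}$/$\texttt{Shape}$ block in Lines 17--22 of Algorithm \ref{algo:GSE*_algo} inserts the edge $(v_{n,m}, v_{n,m+1})$ into $\mathbb{E}_n^{\text{GSE}^\star}$. Concatenating these edges produces a piecewise-linear path $\tau_n$ in $\mathbb{G}^{\text{GSE}^\star}_n$ from $\boldsymbol{X}_{\text{init}}$ to $\boldsymbol{X}_{\text{goal}}$.

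My next step is to parametrize $\tau_n : [0,1] \to \mathbb{X}_{\text{free}}$ by setting $\tau_n(b_m) = v_{n,m}$, with $b_m$ as in Definition \ref{def:ballset}, and extending linearly on each $[b_m, b_{m+1}]$. Because $\sigma'_n$ is by construction the $\|\cdot\|_{\text{BV}}$-closest path to $\sigma_n$ inside $\mathbb{G}^{\text{GSE}^\star}_n$, the optimality of $\sigma'_n$ yields
\begin{equation*}
\|\sigma_n - \sigma'_n\|_{\text{BV}} \;\le\; \|\sigma_n - \tau_n\|_{\text{BV}},
\end{equation*}
so it suffices to prove that the right-hand side tends to zero almost surely on $\liminf_n A_n$. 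For the $L^1$ piece, for every $t \in [b_m, b_{m+1}]$ both $\sigma_n(t)$ and $\tau_n(t)$ lie in the union $B_{n,m} \cup B_{n,m+1}$, giving the pointwise estimate $\|\sigma_n(t) - \tau_n(t)\| \le 2q_n + l_n = \delta_n$; integrating and sending $\delta_n \to 0$ dispatches this component.

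The principal obstacle is to control the total-variation term $\mathrm{TV}(\sigma_n - \tau_n)$. My plan is to bound $\mathrm{TV}(\tau_n)$ interval-wise via $\mathrm{TV}(\tau_n|_{[b_m,b_{m+1}]}) = \|v_{n,m+1} - v_{n,m}\| \le \|\sigma_n(b_{m+1}) - \sigma_n(b_m)\| + 2q_n$, sum over $m \in \{1,\dots,M_n-1\}$, and appeal to Lemma \ref{lem:seqsigma} (so that $\sigma_n \to \sigma^\star$ in the BV topology and in particular $\mathrm{TV}(\sigma_n) \to \mathrm{TV}(\sigma^\star)$) to deduce $\mathrm{TV}(\tau_n) \to \mathrm{TV}(\sigma^\star)$. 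Combining this convergence of total variations with the uniform proximity $\|\sigma_n - \tau_n\|_\infty \le \delta_n \to 0$, a standard BV-continuity argument (Helly-type selection together with lower semicontinuity of TV) gives $\mathrm{TV}(\sigma_n - \tau_n) \to 0$. Together with the $L^1$ bound this yields $\|\sigma_n - \tau_n\|_{\text{BV}} \to 0$ on the probability-one event $\liminf_n A_n$, which by the optimality inequality above proves the theorem.
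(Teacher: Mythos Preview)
Your overall strategy---construct a concrete piecewise-linear path $\tau_n$ through vertices $v_{n,m}\in B_{n,m}$ and then invoke the minimality of $\sigma'_n$---is natural, but the total-variation step does not go through, and this is precisely the point where the paper takes a different route.

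First, the claimed convergence $\mathrm{TV}(\tau_n)\to\mathrm{TV}(\sigma^\star)$ is not established by your bound. Summing $\|v_{n,m+1}-v_{n,m}\|\le \|\sigma_n(b_{m+1})-\sigma_n(b_m)\|+2q_n$ over $m$ yields $\mathrm{TV}(\tau_n)\le \mathrm{TV}(\sigma_n)+2M_n q_n$, but $M_n q_n$ is of order $s_n/\phi$ (a fixed constant), not $o(1)$: the number of balls grows like $1/q_n$ while the per-ball slack is $q_n$. So you only get boundedness of $\mathrm{TV}(\tau_n)$, not convergence to $\mathrm{TV}(\sigma^\star)$. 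Second, even granting that convergence, the inference ``$\mathrm{TV}(\tau_n)\to\mathrm{TV}(\sigma^\star)$, $\mathrm{TV}(\sigma_n)\to\mathrm{TV}(\sigma^\star)$, and $\|\sigma_n-\tau_n\|_\infty\to 0$ imply $\mathrm{TV}(\sigma_n-\tau_n)\to 0$'' is false in general; lower semicontinuity of TV and Helly selection give nothing of this sort. A one-dimensional counterexample is $\sigma_n(t)=t$ and $\tau_n(t)=t+\tfrac{1}{n}\sin(nt)$: both total variations equal $1$, $\|\sigma_n-\tau_n\|_\infty\to 0$, yet $\mathrm{TV}(\sigma_n-\tau_n)$ stays bounded away from $0$.

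The paper avoids this obstruction by \emph{not} relying on Lemma~\ref{approxlem} alone. Instead it introduces shrunken balls $\beta B_{n,m}$ with an arbitrary $\beta\in(0,1)$, counts via $K_n$ how many of these fail to contain a GSE$^\star$ vertex, and shows that $K_n\le\alpha M_n$ forces $\|\sigma_n-\sigma'_n\|_{\mathrm{BV}}\le(\sqrt{2}\alpha+\beta(1-\alpha))L$. The point is that a vertex inside $\beta B_{n,m}$ is within $\beta q_n$ of the center, so the accumulated slack is $O(\beta M_n q_n)=O(\beta)$, which can be made small by choosing $\beta$ small. Proving $\mathbb{P}(K_n\ge\alpha M_n)$ is summable then requires a fresh probabilistic argument (Poissonization to obtain independence across the disjoint $\beta B_{n,m}$, followed by a binomial tail bound and de-Poissonization), rather than a direct appeal to Lemma~\ref{approxlem}. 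Your argument, by using the full-radius balls $B_{n,m}$, is effectively the case $\beta=1$, which is why the TV error refuses to vanish.
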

\begin{proof}
We establish this by showing that $\sum_{n=1}^{\infty}\mathbb{P}(\|\sigma_n -\sigma'_n \|_{\text{BV}}>\epsilon)$ is finite for any $\epsilon>0$. Thus, by a straightforward application of the Borel-Cantelli lemma, we can conclude that $\mathbb{P}({\underset{n \rightarrow \infty}{\lim}\|\sigma_n -\sigma_n' \|=0})=1$.
\begin{figure*}[]
\captionsetup[subfigure]{justification=centering}
\centering
\begin{subfigure}{0.245\textwidth}
{\includegraphics[scale=0.245]{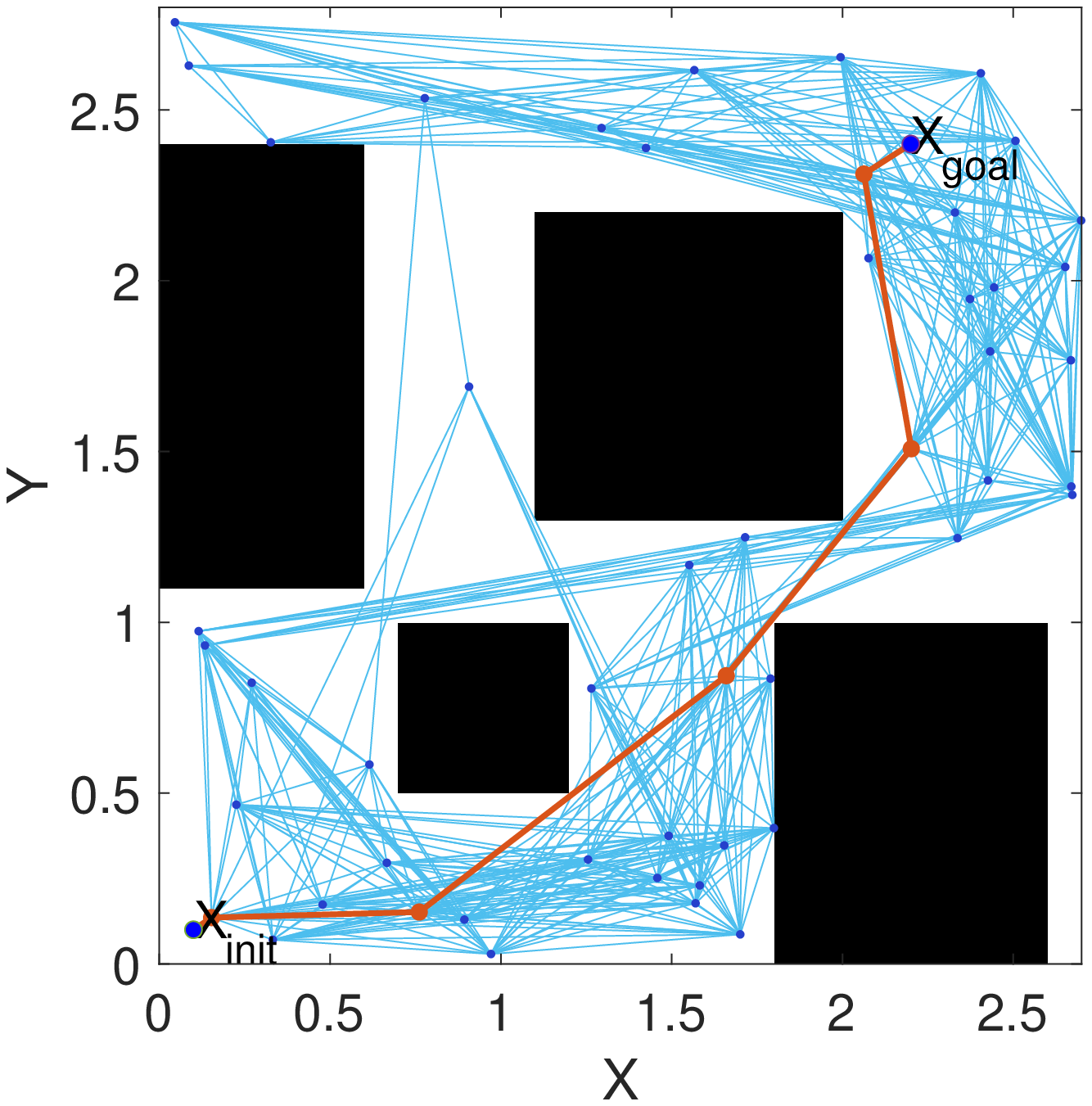}}
\caption{GSE for $m=4$. Cost: 3.64 }
\label{fig:iterations_50_gse}
\end{subfigure}
\begin{subfigure}{0.23\textwidth}
{\includegraphics[scale=0.245]{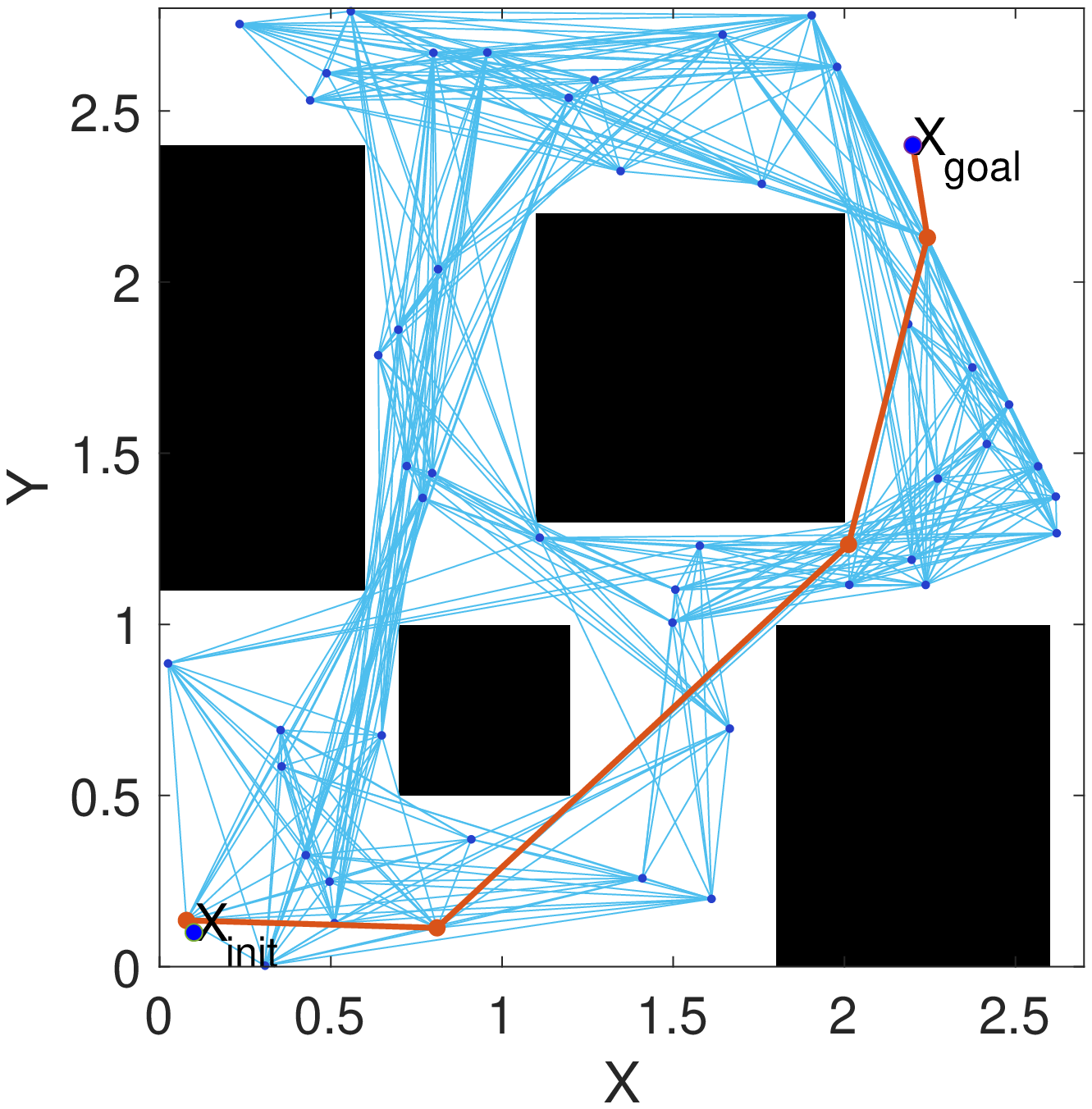}}
\caption{GSE$^\star$ for $m=4$. Cost: 3.61}
\label{fig:iterations_50_gse_star}
\end{subfigure}
\begin{subfigure}{0.245\textwidth}
{\includegraphics[scale=0.245]{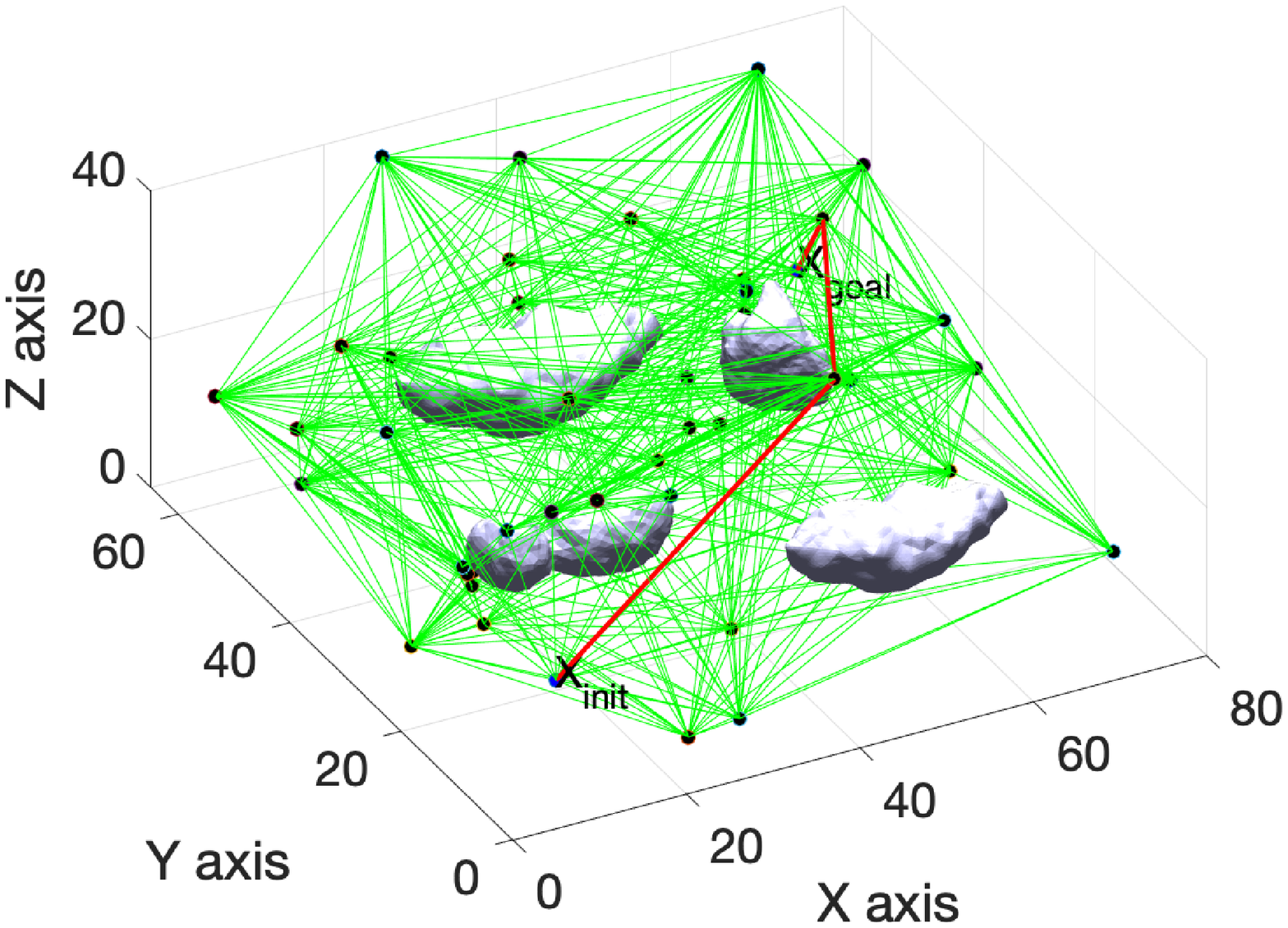}}
\caption{GSE for $m=4$. Cost: 84.32}
\label{fig:gse_3d}
\end{subfigure}
\begin{subfigure}{0.245\textwidth}
{\includegraphics[scale=0.245]{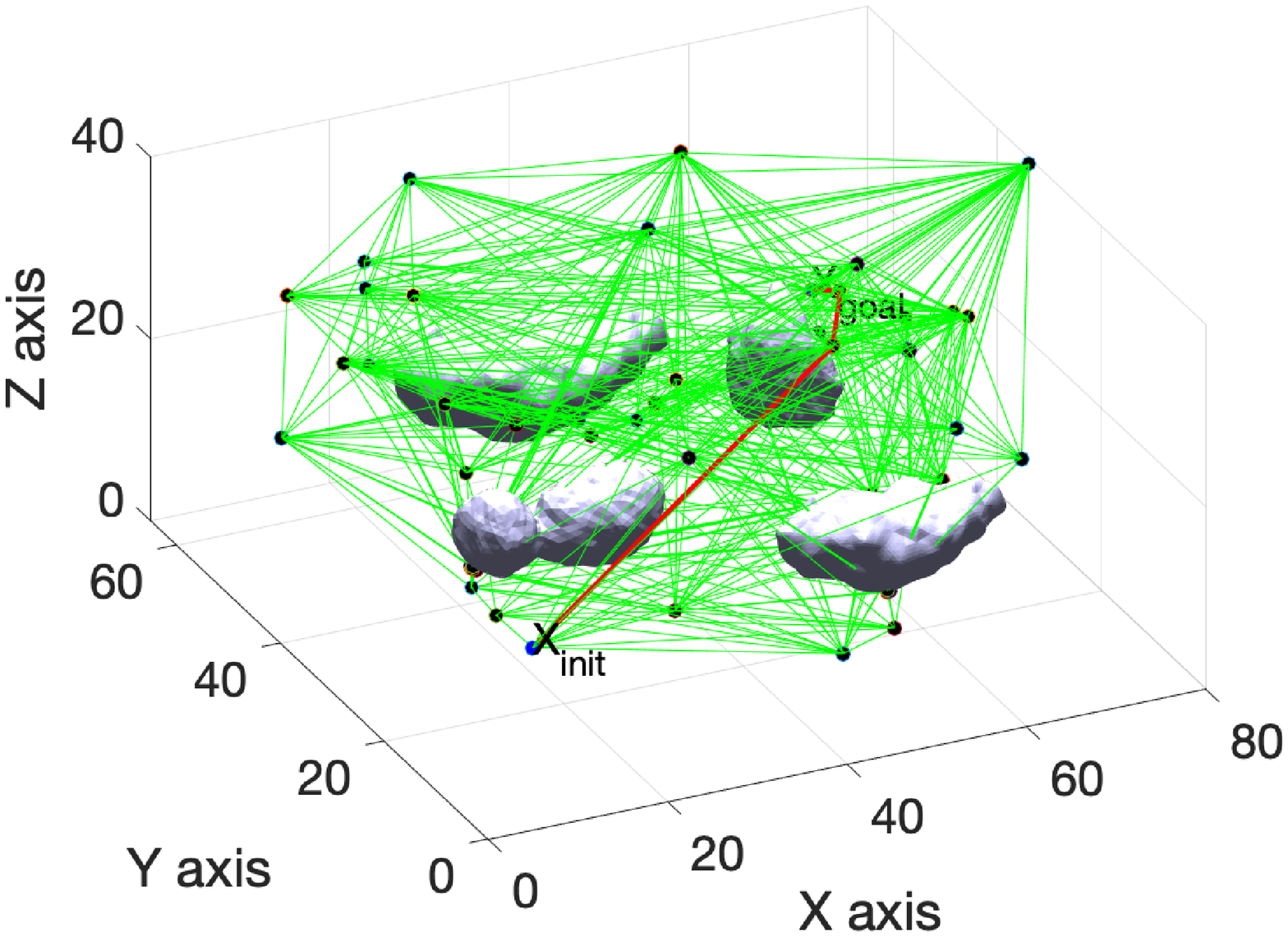}}
\caption{GSE$^\star$ for $m=4$. Cost: 81.37}
\label{fig:gse_star_3d}
\end{subfigure}
\begin{center}
\captionsetup[subfigure]{justification=centering}
\centering
\end{center}
\caption{Illustration of the shortest feasible paths (red lines) found by the GSE and GSE$^\star$ algorithms in 50 iterations, in 2-D and 3-D environments with 4 obstacles.}
\label{}
\end{figure*}
\begin{figure}[]
\captionsetup[subfigure]{justification=centering}
\centering
\begin{subfigure}{0.23\textwidth}
{\includegraphics[scale=0.23]{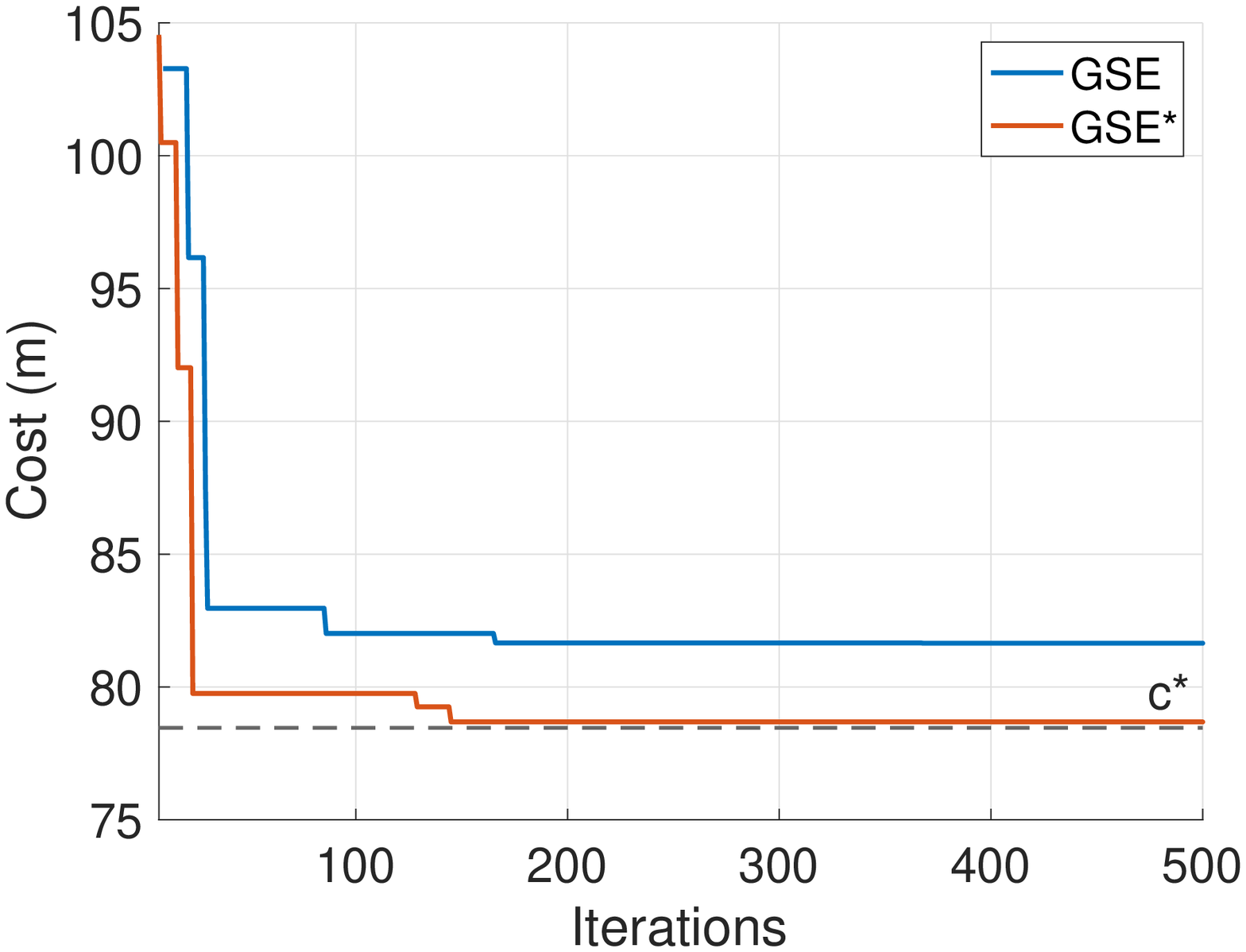}}
\caption{Convergence plot for $m=4$}
\label{fig:convergence_plot_4}
\end{subfigure}
\begin{subfigure}{0.23\textwidth}
{\includegraphics[scale=0.23]{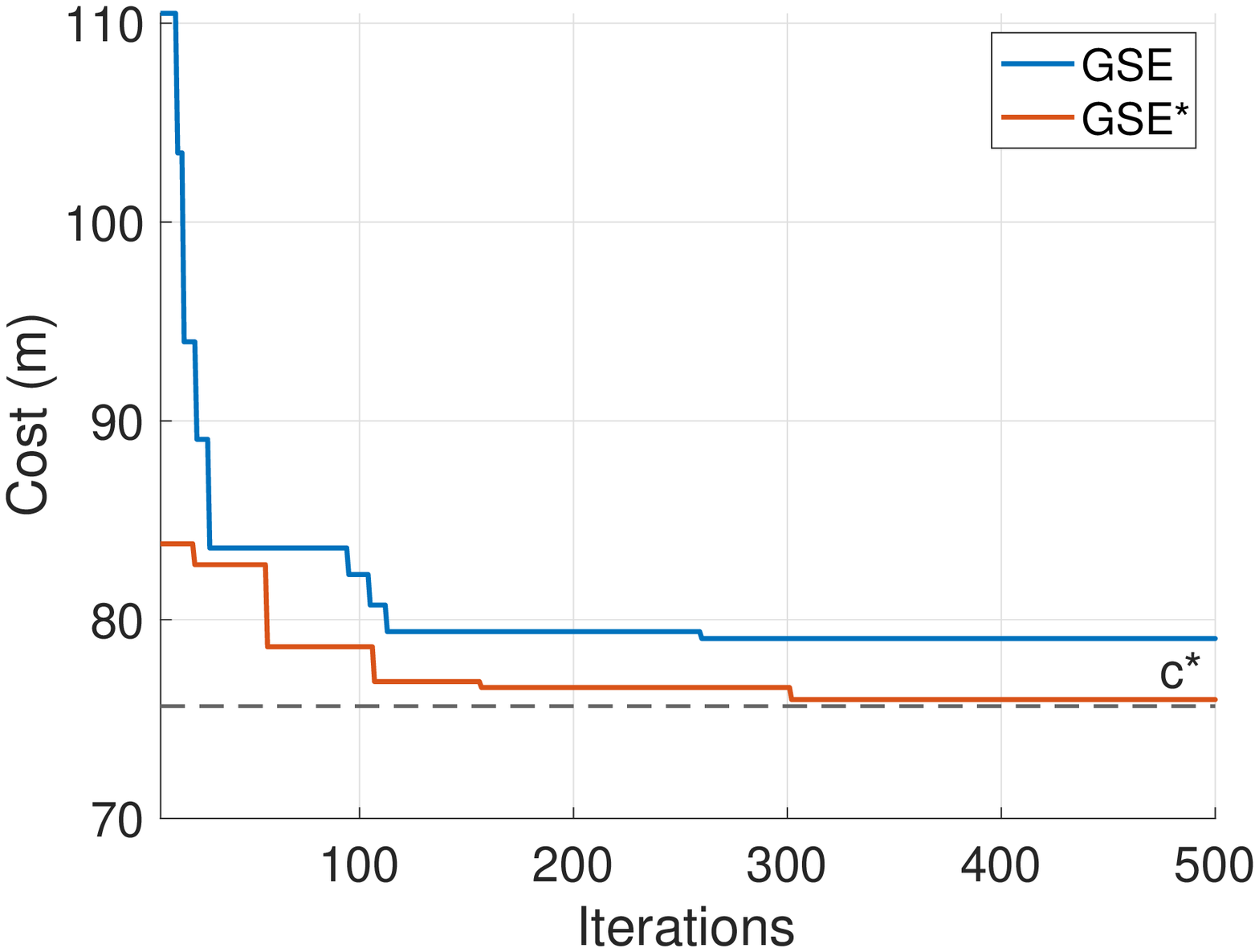}}
\caption{Convergence plot for $m=16$}
\label{fig:convergence_plot_16}
\end{subfigure}
\begin{center}
\captionsetup[subfigure]{justification=centering}
\centering
\end{center}
\caption{Convergence to the optimal cost for the GSE and GSE* algorithms}
\label{}
\end{figure}
We fix $\epsilon>0$, Let $\alpha , \beta \in (0,1)$ be two constants independent of $n$. Define $\beta B_{n,m}$ to be a ball concentric with  $B_{n,m}$, and has a radius of $\beta q_n$. Fig. \ref{gse*balls} provides  a representation of the covering balls and paths.

We define the indicator variables $I_{n,m}$ as follows,
\[
I_{n,m}\triangleq
\begin{cases}
           0 &\text{if there is a vertex of GSE* in the ball $\beta B_{n,m}$ }\\
           1 &\text{otherwise}
\end{cases}
\]
We define $K_n=\sum_{m=1}^{M_{n}}I_{n,m}$. This measures the number of balls $\beta B_{n,m}$ which do not contain a vertex from the graph constructed by the $\text{GSE}^{\star}$ algorithm. Let $L$ be an upper bound on the lengths of the paths $\sigma_n$. We can conclude that if $K_n \leq \alpha M_n$, then $\|\sigma_n -\sigma'_n\|_{\text{BV}}\leq (\sqrt{2}\alpha + \beta (1-\alpha))L$. Hence,
\begin{equation}
    \{K_n\leq \alpha M_n\} \subseteq \{\|\sigma_n -\sigma'_n\|_{\text{BV}}\leq \sqrt{2} L(\alpha +\beta)\}
\end{equation}
This allows us to conclude that,
 \begin{equation} \label{sigma' and k_n}
    \mathbb{P}(\{K_n \geq \alpha M_n\}) \geq  \mathbb{P}(\{\|\sigma_n -\sigma'_n\|_{\text{BV}}\geq \sqrt{2} L(\alpha+\beta)\})
 \end{equation}
Thus, to prove the theorem, we need to show that  $\sum_{n=1}^{\infty}\mathbb{P}(\{K_n \geq \alpha M_n\}) < \infty$.  
In order to show this, we first poissonize the sampling process, to obtain a homogeneous Poisson point process as in Def. \ref{def:hppp} and Lemma \ref{stoyan}. Subsequently we de-poissonize the process to obtain the relevant bounds on $\mathbb{P}(\{K_n \geq \alpha M_n\})$.

Consider $\lambda <1$, independent of $n$. Consider the random variable $\text{Poisson}(\lambda n)$. Let the points sampled by $\text{GSE}^{\star}$ (independently and uniformly at random) be $\{x_1,x_2,\dots,x_n\}$. By Lemma \ref{stoyan}, we can conclude that the set $P_{\lambda n }^{d}=\{x_{1},x_{2},\dots,x_{{\text{Poisson}(\lambda n)}}\}$ is a homogeneous Poisson point process of intensity $\frac{\lambda n}{\mu(\mathbb{X}_{\text{free}})}$. 
Let us denote by $\widehat{K}_n$ to be the number of balls in $\beta \mathfrak{B}_{n,q_n,l_n}={\beta B_{n,1}, \beta B_{n,2} \dots \beta B_{n,M_n}}$ that fail to have a point from the set $P_{\lambda n}^{d}$ within radius $\beta q_n$ of their centres. Notice that the underlying process is different from the process for the  original variable $K_n$. Relating the original process and the poissonized version (following \cite{penrose2003random}),

\begin{align}
    \mathbb{P}(K_n \geq \alpha M_n) \leq  \mathbb{P}(\widehat{K}_n \geq \alpha M_n) + \mathbb{P}(\text{Poisson}(\lambda n) \geq n)
\end{align}

 We can directly offer the bound 
 $\mathbb{P}(\{\text{Poisson}(\lambda n) \geq n\}) \leq e^{-c n}$ where $c>0$ is a constant, by the definition of a Poisson random variable. In order to bound $\mathbb{P}(\widehat{K}_n \geq \alpha M_n)$, observe that for sufficiently small $\beta$, the sets $\beta B_{n,m}$ are disjoint. Thus, the events that each of these balls do not contain a point from $P_{\lambda n}^{d}$ (which is a homogeneous Poisson point process) are mutually independent. 
 The probability $p_n$ that any given set does not contain a ball is therefore given as follows
\begin{align}
  p_n = e^{- \frac{\lambda n}{\mu(\mathbb{X}_{\text{free}})}}  
\end{align}
The variable  $\widehat{K}_n$ can therefore be compared to a binomial random variable (with parameters $M_n$ and $p_n$) as  follows
\begin{align}
    \mathbb{P}(\widehat{K}_n \geq \alpha M_n) \leq  \mathbb{P}(\text{Binomial}(M_n,p_n) \geq \alpha M_n) \leq e^{- M_n p_n}
    \label{eqn:k_n_less_than_e}
\end{align}
Therefore from Eq. \eqref{eqn:k_n_less_than_e}, we can conclude the following
\begin{equation}
    \sum_{n=1}^{\infty} \mathbb{P}(K_n \geq \alpha M_n) \leq \sum_{n=1}^{\infty}(e^{-cn} + e^{-M_n p_n}) \leq \infty
\end{equation}
Notice that the constants $\alpha, \beta $ are arbitrary. Consequently we can set $\epsilon=\sqrt{2}(\alpha + \beta)L )$, for arbitrary $\epsilon>0$ From the inequality \eqref{sigma' and k_n} and the equation above we know that 
\begin{equation}
     \sum_{n=1}^{\infty} \mathbb{P}(\{\|\sigma_n -\sigma'_n\|_{\text{BV}}> \epsilon \}) < \infty
\end{equation}
By the Borel-Cantelli lemma and the equation above, we have $\mathbb{P}(\{\|\sigma_n -\sigma'_n\|_{\text{BV}}=0)=1$.

Since $\underset{n\rightarrow\infty}{\lim}\sigma_n =\sigma^\star$ from Lemma \ref{lem:seqsigma}, by application of the triangle inequality,
\begin{align}
  \mathbb{P}(\{\underset{n \rightarrow \infty }{\lim}c(\sigma'_{n})= c(\sigma^\star)\})=1  
\end{align}
Thus, the asymptotic optimality of the GSE$^\star$ is proved.
\end{proof}
 
 \begin{obs}
\textnormal{  As a corollary of Theorem \ref{gse*mainthm}, we notice that $\|\sigma'_n - \sigma^\star\|_{\text{BV}} \leq \|\sigma'_n - \sigma_n\|_{\text{BV}} +\| \sigma_n - \sigma^\star\|_{\text{BV}}$, and hence the $\mathbb{P}(\|\sigma'_n - \sigma^\star\|_{\text{BV}}> \epsilon)$ decays with  increasing $n$ for all $\epsilon >0$, exponentially.}
 \end{obs}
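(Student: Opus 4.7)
The plan is to combine the triangle inequality already displayed in the observation with a union bound, and then estimate each of the two resulting terms by invoking, respectively, the deterministic convergence $\sigma_n\to\sigma^\star$ from Lemma \ref{lem:seqsigma} and the exponential tail bound for $\|\sigma'_n-\sigma_n\|_{\text{BV}}$ that is already contained in the proof of Theorem \ref{gse*mainthm}. Concretely, the event $\{\|\sigma'_n-\sigma^\star\|_{\text{BV}}>\epsilon\}$ is contained in the union $\{\|\sigma'_n-\sigma_n\|_{\text{BV}}>\epsilon/2\}\cup\{\|\sigma_n-\sigma^\star\|_{\text{BV}}>\epsilon/2\}$, so it suffices to bound the probability of each sub-event separately and add.

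For the second sub-event I would note that the sequence $(\sigma_n)_{n=1}^{\infty}$ is constructed deterministically in Lemma \ref{lem:seqsigma} via the fixed homotopy $\psi$ and the deterministic sequence $\delta_n$; it carries no randomness from the GSE$^\star$ samples. Since that lemma gives $\sigma_n\to\sigma^\star$ in the BV topology of $\mathfrak{S}$, for every fixed $\epsilon>0$ there is a threshold $N_0(\epsilon)$ such that $\|\sigma_n-\sigma^\star\|_{\text{BV}}\le \epsilon/2$ for all $n\ge N_0(\epsilon)$, whence $\mathbb{P}(\|\sigma_n-\sigma^\star\|_{\text{BV}}>\epsilon/2)=0$ for such $n$. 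This term therefore contributes nothing to the tail once $n$ is large.

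For the first sub-event I would reuse the bounds derived inside the proof of Theorem \ref{gse*mainthm}. There, for any $\alpha,\beta\in(0,1)$ independent of $n$, one has
\begin{equation}
\mathbb{P}\bigl(\|\sigma'_n-\sigma_n\|_{\text{BV}}>\sqrt{2}(\alpha+\beta)L\bigr)\le \mathbb{P}(K_n\ge\alpha M_n)\le e^{-cn}+e^{-M_n p_n},
\end{equation}
and both summands on the right are exponentially small in $n$ by the poissonization/Chernoff arguments in that proof. Because $\alpha,\beta$ are free parameters of that argument, I can choose them so that $\sqrt{2}(\alpha+\beta)L<\epsilon/2$, which yields a bound of the form $A(\epsilon)e^{-B(\epsilon)n}$ with $A(\epsilon),B(\epsilon)>0$ depending only on $\epsilon$ and the problem data.

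Combining the two estimates gives $\mathbb{P}(\|\sigma'_n-\sigma^\star\|_{\text{BV}}>\epsilon)\le A(\epsilon)e^{-B(\epsilon)n}$ for every $n\ge N_0(\epsilon)$, and the finitely many small-$n$ indices can be absorbed into the prefactor to extend the bound to all $n\in\mathbb{N}$. The only subtle point is keeping the deterministic sequence $(\sigma_n)$ cleanly separated from the stochastic sequence $(\sigma'_n)$: once that is done, the proof reduces to a direct transfer of the exponential decay already proved for $\|\sigma'_n-\sigma_n\|_{\text{BV}}$ to the quantity $\|\sigma'_n-\sigma^\star\|_{\text{BV}}$, with no new probabilistic input required beyond Lemma \ref{lem:seqsigma} and Theorem \ref{gse*mainthm}.
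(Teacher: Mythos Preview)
Your proposal is correct and follows essentially the same route the paper sketches: the observation in the paper is stated without a separate proof, simply pointing to the triangle inequality $\|\sigma'_n-\sigma^\star\|_{\text{BV}}\le\|\sigma'_n-\sigma_n\|_{\text{BV}}+\|\sigma_n-\sigma^\star\|_{\text{BV}}$ and inferring exponential decay from the bounds obtained inside the proof of Theorem \ref{gse*mainthm}. Your explicit $\epsilon/2$ union-bound split, the observation that $(\sigma_n)$ is deterministic so the second term vanishes for large $n$, and the reuse of the $e^{-cn}+e^{-M_np_n}$ estimate for the first term are exactly what the paper leaves implicit.
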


\section{Simulation Results}\label{sec:results}
In this section, we present numerical validation of the properties of the algorithms GSE and GSE$^\star$ analyzed in the paper. Simulation studies have been carried out in 2-D and 3-D environments having 4 and 16 obstacles representing low to high obstacle densities using MATLAB R2020a on Intel Core i7 2.2GHz processor. Simulations and comparison studies related to probabilistic completeness of the GSE have already been provided in \cite{probabilistic_completeness_gse} and hence, are omitted here for brevity. First, feasible shortest paths generated using the GSE and GSE$^\star$ algorithms over 50 iterations in the 2-D and 3-D environments each having both 4 and 16 obstacles are illustrated in Figs. \ref{fig:iterations_50_gse}, \ref{fig:iterations_50_gse_star} and Figs. \ref{fig:gse_3d}, \ref{fig:gse_star_3d}, respectively. 

Throughout the simulations, the cost function is taken to be the Euclidean path length. Since PRM$^\star$ is asymptotically optimal, the approximate optimal cost ($c^\star$) is found using the PRM$^\star$ which is run over 8000 iterations. Figs. \ref{fig:convergence_plot_4} and \ref{fig:convergence_plot_16} depict the convergence of costs to the optimal one for 4 and 16 obstacles in a 3-D workspace. For a given number of obstacles, four different randomly generated workspaces are considered. For each workspace, 100 simulations, of 500 iterations in each simulation, have been run for each algorithm. The cost obtained is averaged over each set of 100 simulations for a given workspace. This average cost is again averaged over the four different workspaces, thus providing the costs depicted in the convergence plots in Figs. \ref{fig:convergence_plot_4} and \ref{fig:convergence_plot_16}.

The numerical simulations are primarily aimed at demonstrating the performance of the GSE$^\star$ algorithm in comparison with the original version  of the GSE algorithm in terms of the convergence of cost function to the optimal cost. It is observed that the GSE$^\star$ converges to the optimal cost in around 150 and 300 iteration for 4 and 16 obstacles, respectively, whereas the GSE is found not to converge even in 500 iterations to the optimal cost. This justifies the claim of asymptotic non-optimality of the GSE in Section \ref{GSEasopt} and the asymptotic optimality of the GSE$^\star$ algorithm in Section \ref{GSE*asopt}.

\section{\label{sec:conclusion}Conclusion}
The recently-proposed Generalized Shape Expansion (GSE) algorithm was shown to be probabilistically complete. In this paper, we have extended the study on the GSE to show that it is not asymptotically optimal by observing its non-zero probability of generating low-quality solutions to the promenade problem. In order to improve in the context of asymptotic optimality, a modified GSE algorithm, namely the $\text{GSE}^{\star}$ algorithm, has been presented. A detailed mathematical analysis has been given to prove both probabilistic completeness and asymptotic optimality of the GSE$^\star$. Further, simulation studies comparing the performance of these algorithms in various environments have shown marked improvement in asymptotic convergence to the optimal cost by the GSE$^\star$ over the GSE, which justifies the presented theoretical results. The key feature of the generalized shape expansion-based algorithms is the use of the generalized shape to generate collision-free paths. Leveraging these shapes have been found to greatly reduce the number of iterations required to obtain a feasible path attaining a minimally connected graph. While maintaining the notion of the GSE$^\star$ to gain computational advantages could be a potential future scope of research.







\ifCLASSOPTIONcaptionsoff
  \newpage
\fi

\bibliography{main.bib}
\end{document}